\newcommand*\circled[1]{\tikz[baseline=(char.base)]{
		\node[shape=circle,draw,inner sep=0pt] (char) {#1};}}
\newcommand*\blackcircled[1]{\tikz[baseline=(char.base)]{
		\node[shape=circle,draw,fill=black,inner sep=0pt] (char) {\textcolor{white}{#1}};}}
\definecolor{commentgreen}{rgb}{0, 0.5, 0}
\newcommand{\algrule}[1][.7pt]{\par\vskip.5\baselineskip\hrule height #1\par\vskip.5\baselineskip}
\let\oldnl\nl
\newcommand{\nonl}{\renewcommand{\nl}{\let\nl\oldnl}}
\newcommand{\parabf}[1]{\medskip\noindent\textbf{#1}}
\newcommand{\revised}[1]{#1}
\newcommand{\mosharaf}[1]{\textcolor{blue}{\{MC: #1\}}}
\newcommand{\jw}[1]{\textcolor[rgb]{0.25,0.65,0.1}{\{JW: #1\}}}
\newcommand{\yile}[1]{\textcolor{orange}{\{YG: #1\}}}
\newcommand{\insu}[1]{\textcolor[rgb]{0.56, 0.27, 0.52}{\{IS: #1\}}}
\renewcommand{\mosharaf}[1]{}
\renewcommand{\jw}[1]{}
\renewcommand{\yile}[1]{}
\renewcommand{\insu}[1]{}
\def\ie{{i.e.\xspace}}
\def\eg{{e.g.\xspace}}
\newenvironment{denseitemize}{
	\begin{itemize}[topsep=2pt, partopsep=0pt, leftmargin=1.5em]
		\setlength{\itemsep}{2pt}
		\setlength{\parskip}{0pt}
		\setlength{\parsep}{0pt}
	}{\end{itemize}}
\newenvironment{denseenum}{
	\begin{enumerate}[topsep=2pt, partopsep=0pt, leftmargin=1.5em]
		\setlength{\itemsep}{2pt}
		\setlength{\parskip}{0pt}
		\setlength{\parsep}{0pt}
	}{\end{enumerate}}
\renewcommand{\cite}{\citep}
\renewcommand{\paragraph}{\parabf}
\begin{document}

\title{Reducing Energy Bloat in Large Model Training}

\author{Jae-Won Chung$^{\text{1}}$\enskip Yile Gu$^{\text{1}, \text{2}}$\enskip Insu Jang$^{\text{1}}$\enskip Luoxi Meng$^{\text{1}, \text{3}}$\enskip Nikhil Bansal$^{\text{1}}$\enskip Mosharaf Chowdhury$^{\text{1}}$}
\affiliation{\vspace{1mm} $^{\text{1}}$University of Michigan \enskip $^{\text{2}}$University of Washington \enskip $^{\text{3}}$University of California, San Diego \country{}}

\begin{abstract}
Training large AI models on numerous GPUs consumes a massive amount of energy, making power delivery one of the largest limiting factors in building and operating datacenters for AI workloads.
However, we observe that not all energy consumed during training directly contributes to end-to-end \revised{throughput; a} significant portion can be removed without slowing down training.
We call this portion \emph{energy bloat}.

In this work, we identify two independent sources of energy bloat in large model training and propose Perseus, a training system that mitigates both.
To do this, Perseus obtains the \revised{time--energy tradeoff frontier of a} large model training job using an efficient graph cut-based \revised{algorithm, and schedules computation energy consumption} across time to reduce both types of energy bloat.
Evaluation on large \revised{models, including GPT-3 and Bloom,} shows that Perseus reduces the energy consumption of large model training by up to 30\% without any throughput loss or hardware modification.\footnote{Perseus is open-source as part of Zeus~\cite{zeus-nsdi23} at \url{https://ml.energy/zeus}.}  %

\end{abstract}
\begin{CCSXML}
<ccs2012>
   <concept>
       <concept_id>10010520</concept_id>
       <concept_desc>Computer systems organization</concept_desc>
       <concept_significance>500</concept_significance>
       </concept>
   <concept>
       <concept_id>10010147.10010257</concept_id>
       <concept_desc>Computing methodologies~Machine learning</concept_desc>
       <concept_significance>500</concept_significance>
       </concept>
   <concept>
       <concept_id>10011007.10010940.10010941.10010949.10010957.10010964</concept_id>
       <concept_desc>Software and its engineering~Power management</concept_desc>
       <concept_significance>500</concept_significance>
       </concept>
 </ccs2012>
\end{CCSXML}

\ccsdesc[500]{Computer systems organization}
\ccsdesc[500]{Computing methodologies~Machine learning}
\ccsdesc[500]{Software and its engineering~Power management}

\keywords{Energy-efficiency, datacenter power management, straggler, distributed training, large model training}

\maketitle

\pagestyle{plain}
\pagenumbering{gobble}

\section{Introduction}\label{sec:intro}

As deep neural networks (DNNs) continue to grow in model and dataset size~\cite{scalinglaws-arxiv20,chinchilla-neurips22}, the energy consumption of large model training is increasing as well.
For instance, training GPT-3~\cite{gpt3} reportedly consumed 1.3 GWh~\cite{patterson2021carbon}.
\revised{Then, this was dwarfed by Amazon's training of a 200B model, which consumed about 11.9 GWh~\cite{hamilton2024constraint}---enough to power more than 1,000 average US households for a year~\cite{us-household}.}
Such energy-intensive large model training not only inflates datacenter operational expenses, but also made power delivery a primary challenge in building datacenters today~\cite{cbre2023,mckinsey2023,cbre2024,aienergy-joule24,openai-keynote-hotchips24}.

\looseness=-1
Despite recent works on accelerating large model training~\cite{megatronlm-sc21,alpa-osdi22,merak-tpds23}, energy optimization \revised{remains} an open challenge~\cite{greenai-cacm20,patterson2021carbon}.
While energy optimization is well-studied in the hardware community~\cite{eyeriss-jssc16,vivienne17efficient,puma-asplos19,kws-jssc24}, the power bottleneck of recent datacenters~\cite{cbre2023,mckinsey2023,cbre2024,dvfs-boosting-asplos24,openai-keynote-hotchips24} shows that efficiency gains from hardware advancement alone are not sufficient to sustain the growing demand for AI compute.
In light of this, recent works show that software can play a significant role in energy optimization by capturing application characteristics that general-purpose hardware cannot (\eg, no need to finish computation before the deadline), bringing hardware-agnostic energy-efficiency gains~\cite{gpoeo-tpds21,zeus-nsdi23,chase-ccai23,envpipe-atc23,crosslayer-energy-eecs24}.

\looseness=-1
In this paper, we seek a \emph{software} method that reduces the energy consumption of large model training \emph{without} slowdown, thereby also reducing average power draw.
To that end, we identify \revised{\emph{energy bloat}, the portion of energy consumption that can be removed without slowdown in} software systems for large model training.
We find two independent sources of energy \revised{bloat---\emph{intrinsic} and \emph{extrinsic}---and} propose a single optimization framework that minimizes both.

Intrinsic energy bloat comes from computation imbalance when a large model is distributed across multiple GPUs with pipeline parallelism (\S\ref{sec:motivation-intrinsic}).
Balancing the amount of computation in each pipeline stage is an important problem for distributed execution planning~\cite{pipedream-sosp19,dapple-ppopp21,autopipe-icml21,alpa-osdi22}, but perfectly balancing every stage is not always possible because layers in a DNN are coarse-grained tensor operations with varying amounts of computation.
\revised{When stages have unequal computation times}, those not on the \emph{critical path} of computation run needlessly fast---that is, they consume energy that does not contribute to the overall training throughput.
Such intrinsic energy bloat opens up the opportunity to \emph{precisely} slow down each non-critical computation in the pipeline such that the length of the critical path does not change.

Extrinsic energy bloat, in contrast, arises when multiple pipelines run in parallel in a synchronous fashion, and one or more pipelines run slower than the rest (\S\ref{sec:motivation-extrinsic}).
Root causes behind such slowdowns are varied, including power/thermal throttling~\cite{A100powerthermal-energies21,gpupower-cal23,thunderbolt-osdi20,mvpp-asplos20,polca-asplos24}, I/O bottlenecks in the storage/network~\cite{datastalls-vldb21,recsys-dsi-isca22,antdt-icde24}, \revised{and} hardware/software failures~\cite{bamboo-nsdi23,oobleck-sosp23,recycle-sosp24}, and the likelihood of their presence increases with the scale and duration of training~\cite{largescalefailures-sc17,philly-atc19,mlaas-nsdi22}.
All pipelines running faster than the \emph{straggler pipeline} are needlessly \revised{fast, wasting} energy that does not affect the overall training throughput.
Thus, we can slow down entire pipelines without delaying gradient synchronization.

In this work, we propose \emph{Perseus}, which formulates a unified optimization framework to remove both intrinsic and extrinsic energy bloat from large model training (\S\ref{sec:overview}).
At its core, Perseus efficiently pre-characterizes the entire \revised{time--energy tradeoff frontier of a training iteration}, allowing it to minimize intrinsic bloat under normal operation and to mitigate extrinsic bloat arising from stragglers.
Existing works fall short on both fronts.
EnvPipe~\cite{envpipe-atc23} is limited to intrinsic bloat reduction with a point solution that leads to suboptimal energy reduction.
Zeus~\cite{zeus-nsdi23}, in contrast, ignores intrinsic bloat as it only considers single-GPU training, which also renders its \revised{time--energy} frontier suboptimal for large models.

\looseness=-1
We show that characterizing the optimal time--energy Pareto frontier is NP-hard not only to solve, but also to approximate within any constant factor. 
Given this impasse, we propose an efficient algorithm that optimally solves a relaxed problem instead (\S\ref{sec:design}).
To do so, Perseus represents one training iteration as a directed acyclic graph (DAG) \revised{of forward and backward computations in each pipeline stage}.
\revised{Then, Perseus efficiently generates all \emph{energy schedules}, defined as the planned time and energy consumption of each computation, that are on the \revised{time--energy frontier} using a graph cut-based algorithm that iteratively \emph{crawls up} the frontier from the bottom.}
Minimizing intrinsic and/or extrinsic energy bloat is then as simple as choosing the appropriate energy schedule from the \revised{pre-characterized time--energy frontier}.

\looseness=-1
Perseus \revised{consists of a client library and a server (\S\ref{sec:implementation}).
The client library integrates with a large model training framework and accelerator to measure computation energy consumption and control accelerator speed.
The server produces optimized energy schedules, using the abstract computation DAG and time/energy measurements provided by the client.}

Evaluation on large models \revised{(GPT-3~\cite{gpt3}, BERT~\cite{bert}, T5~\cite{t5-jmlr20}, Bloom~\cite{bloom-arxiv22}, and a scaled-up version of Wide-ResNet~\cite{wideresnet-bmvc16})}, shows that Perseus is able to reduce per-iteration energy consumption by up to 30\% with negligible or no slowdown, reducing energy consumption and average power draw (\S\ref{sec:eval}).

Overall, we make the following contributions in this paper:
\begin{denseitemize}
  \item We identify intrinsic and extrinsic energy bloat in large model training, fundamentally caused by computation time imbalance at different levels.
  
  \item We propose Perseus, a software-only energy optimization system that reduces energy bloat through a unified optimization framework and a graph cut-based algorithm.
  
  \item We evaluate Perseus on a diverse set of large model workloads and show that it significantly reduces energy bloat, bringing hardware-agnostic energy savings.
\end{denseitemize}

\section{Motivation}\label{sec:motivation}

\begin{figure}[t]
  \centering
  \subfloat[Execution timeline of one training iteration]{
    \includegraphics[width=0.465\textwidth]{
      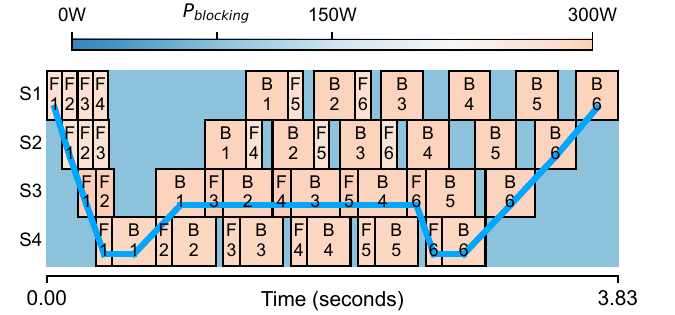
    }\label{fig:motivation-lowtime-maxfreq}
  }

  \subfloat[Execution timeline with reduced intrinsic energy bloat]{
    \includegraphics[trim={0 0 0 25},clip,width=0.465\textwidth]{
      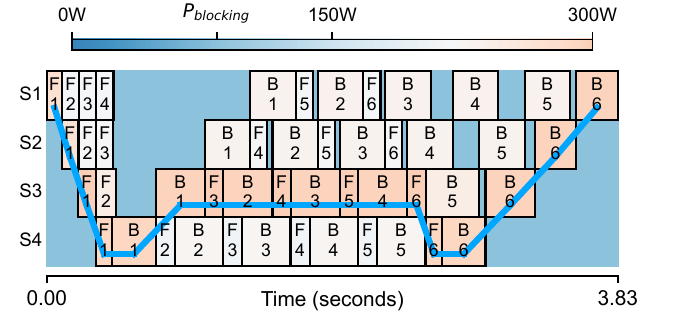
    }\label{fig:motivation-lowtime-lastpd}
  }
  \caption{One training iteration of GPT-3 1.3B with 4 pipeline stages and 6 microbatches on NVIDIA A100 \revised{GPUs, drawn} to scale.
    For example, F5 and B5 in the S2 row denote forward and backward for the fifth microbatch on Stage 2. 
    The critical path is traced with a blue line.
    Colors show power consumption.
    Other models are visualized in Appendix~\ref{sec:appendix-intrinsic-bloat}.}
\end{figure}

First, we provide necessary background regarding large model training (\S\ref{sec:motivation-large-model-training}).
Then, we introduce intrinsic (\S\ref{sec:motivation-intrinsic}) and extrinsic (\S\ref{sec:motivation-extrinsic}) energy bloat present in large model training, and discuss opportunities for energy reduction (\S\ref{sec:motivation-opportunity}).

\subsection{Large Model Training}\label{sec:motivation-large-model-training}

\looseness=-1
Large model training is mostly dominated by 3D (data, tensor, and pipeline) parallelism~\cite{megatronlm-sc21,megatron-turing-arxiv22,bloom-arxiv22,merak-tpds23,falcon-arxiv23}.
\revised{Especially, pipeline parallelism partitions a large model into multiple \emph{stages} and its training batch into \emph{microbatches}, and pipelines forward and backward computations through the stages.}
Then, such pipelines are replicated to perform data parallel training.
Pipelines can only move on to the next iteration after every pipeline has finished and synchronized gradients.

\subsection{Intrinsic Energy Bloat}\label{sec:motivation-intrinsic}

\looseness=-1
We profile GPT-3 1.3B on NVIDIA A100 GPUs and visualize the timeline of one training iteration in Figure~\ref{fig:motivation-lowtime-maxfreq}.
In addition to the familiar bubbles in the 1F1B schedule~\cite{megatronlm-sc21}, we observe \emph{gaps} \revised{between} forward and backward computations, where the GPU is simply blocking on communication with an adjacent stage.
Such gaps exist because the computation time of each pipeline stage is not perfectly balanced.
Partitioning stages in a balanced manner is an important problem in distributed execution planning~\cite{gpipe-neurips19,pipedream-sosp19,dapple-ppopp21,alpa-osdi22}, but \emph{perfect} balancing is difficult because DNNs are essentially a sequence of coarse-grained tensor operations with varying \revised{sizes}.

\begin{table}[t!]
  \footnotesize
	\centering
  \begin{tabular}{lrrr}
    \toprule
    \multirow{2}{*}{\textbf{Model}} & \multirow{2}{*}{\textbf{\# Parameters}} & \multicolumn{2}{c}{\textbf{Imbalance Ratio}} \\ 
                   & & 4 stages & 8 stages \\
    \midrule
    \multirow{4}{*}{GPT-3~\cite{gpt3}}          &   3B  & 1.13  & 1.25  \\
                                                &   7B  & 1.11  & 1.23  \\
                                                &  13B  & 1.08  & 1.17  \\
                                                & 175B  & 1.02  & 1.03  \\
    \midrule
    \multirow{3}{*}{Bloom~\cite{bloom-arxiv22}} &   3B  & 1.13  & 1.25  \\
                                                &   7B  & 1.13  & 1.25  \\
                                                & 176B  & 1.05  & 1.10  \\
    \midrule
    \multirow{2}{*}{BERT~\cite{bert}}           & 0.1B  & 1.33  & 2.00  \\
                                                & 0.3B  & 1.17  & 1.33  \\
    \midrule
    \multirow{3}{*}{T5~\cite{t5-jmlr20}}        & 0.2B  & 1.19  & 1.50  \\
                                                & 0.7B  & 1.05  & 1.11  \\
                                                & 2.9B  & 1.06  & 1.16  \\
    \midrule
    Wide-ResNet50~\cite{wideresnet-bmvc16}      & 0.8B  & 1.23  & 1.46  \\
    Wide-ResNet101~\cite{wideresnet-bmvc16}     & 1.5B  & 1.09  & 1.25  \\
    \bottomrule
  \end{tabular}
  \caption{Forward latency ratio of the longest to the shortest stage on A100 GPUs. 1.00 would mean perfect balance.}\label{tab:motivation-imbalance}
\end{table}

To understand the amount of possible pipeline stage imbalance, we exhaustively searched for the pipeline partition with the smallest imbalance ratio, defined as the ratio of the longest stage forward computation latency to the shortest.\footnote{For Transformer-based models, we partition at the granularity of Transformer layers. For Wide-ResNet, we partition at the granularity of bottleneck layers, which are three convolution layers wrapped with a skip connection.}
Table~\ref{tab:motivation-imbalance} lists the minimum imbalance ratio for various models, which shows that perfect balance is difficult to achieve.
See Appendix~\ref{sec:appendix-workloads} for partitioning details and sources of imbalance.

Given stage imbalance, \revised{not all forward and backward computations are on the \emph{critical path} of computation (Figure~\ref{fig:motivation-lowtime-maxfreq}).}
This means that non-critical computations running at their maximum speed \revised{are} not contributing to faster iteration time, and thus simply wasting energy.
We call this intrinsic energy bloat, which can be reduced by precisely slowing down each non-critical computation without lengthening the critical path (Figure~\ref{fig:motivation-lowtime-lastpd}).
Although seemingly simple, this problem is not only NP-hard to solve, but also NP-hard to even \emph{approximate} to any constant factor~\cite{svensson2012hardness}. %

\subsection{Extrinsic Energy Bloat}\label{sec:motivation-extrinsic}

Numerous replicas of the same pipeline run in a data parallel fashion in large model training.
Because every pipeline must synchronize gradients at the end, if \revised{one} pipeline runs slower, \emph{all other} pipelines must wait until the straggler pipeline finishes (Figure~\ref{fig:motivation-extrinsic-before}).
Since the straggler pipeline determines end-to-end iteration time, all other pipelines running at their fastest possible iteration time \revised{are} wasteful.
We call this extrinsic energy bloat, because unlike intrinsic energy bloat, its cause is \emph{extrinsic} to the \revised{training} pipeline.
To reduce extrinsic bloat while keeping intrinsic bloat low, one can determine the energy-optimal iteration time for non-straggler pipelines and precisely slow down computations so that non-straggler pipelines attain that iteration time (Figure~\ref{fig:motivation-extrinsic-after}).

\begin{figure}[!t]
  \centering
  \subfloat[Extrinsic energy bloat caused by a straggler]{
    \includegraphics[width=0.38\textwidth]{
      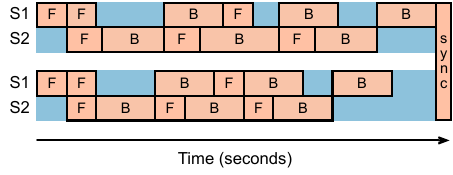
    }\label{fig:motivation-extrinsic-before}
  }
  \vspace{2mm}

  \subfloat[Reduced intrinsic and extrinsic energy bloat]{
    \includegraphics[width=0.38\textwidth]{
      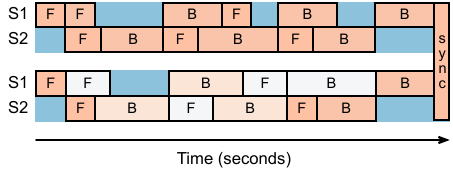
    }\label{fig:motivation-extrinsic-after}
  }
  \caption{\revised{Among two data parallel pipelines, the first one \revised{becomes} a straggler.}
    The non-straggler pipeline \revised{causes extrinsic energy bloat by running as fast as possible (a), which can be reduced by precisely slowing it down (b).}
  }\label{fig:motivation-extrinsic}
    \vspace{3mm}
\end{figure}

Stragglers arise from numerous sources.
Thermal or power throttling in a datacenter can result in 10--50\% slowdown~\cite{A100powerthermal-energies21,gpupower-cal23,thunderbolt-osdi20,mvpp-asplos20,polca-asplos24}, and I/O bottlenecks in the storage or network can be longer than GPU computation by up to $4\times$~\cite{datastalls-vldb21,recsys-dsi-isca22,antdt-icde24}, acting like a persistent straggler pipeline.
Recent failure-resilient training frameworks~\cite{bamboo-nsdi23,oobleck-sosp23,recycle-sosp24} deploy \emph{heterogeneous} pipelines, introducing non-uniform iteration times.
With increasing job and infrastructure scale, the probability of encountering stragglers increases~\cite{largescalefailures-sc17,philly-atc19,mlaas-nsdi22,megascale-nsdi24,tpuv4-nsdi24}.

In this work, we focus on stragglers that are known to and anticipated by the training infrastructure, generally because they were created by the infrastructure itself (\eg, power and thermal throttling, non-compute bottlenecks, fault-tolerant planning).
Such stragglers also tend to \revised{persist} beyond typical training iteration times.
\revised{Therefore, Perseus focuses on planning time and energy consumption across time and allowing quick adaptation, assuming that information about stragglers is available.}

\subsection{Potential Benefits of Reducing Energy Bloat}\label{sec:motivation-opportunity}

To gauge potential energy savings, we measure the energy savings \revised{achieved by} slowing down \emph{every} computation in the pipeline to their minimum-energy frequencies.
This will slow down iteration time, but can act as an upper bound for energy savings.
For our workloads in Section~\ref{sec:eval-reducing-energy-bloat}, this gives on average 16\% and 27\% energy reduction on A100 and A40 GPUs, respectively.
\revised{Section~\ref{sec:eval-reducing-energy-bloat} shows} that Perseus can realize most of the potential savings with negligible \revised{slowdown.}

\begin{figure*}[!ht]
    \centering
    
    \subfloat[$T' = T_{\min}$]{
      \includegraphics[width=0.32\textwidth]{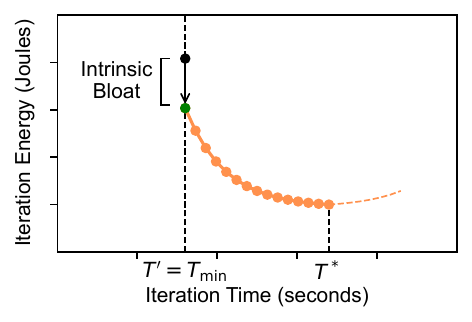}\label{fig:overview-frontier1}
    }
    \subfloat[$ T_{\min} < T' \leq T^*$]{
      \includegraphics[width=0.32\textwidth]{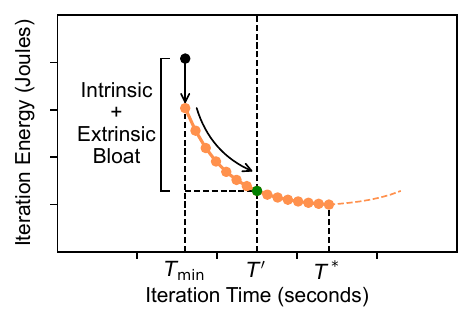}\label{fig:overview-frontier2}
    }
    \subfloat[$T^* < T'$]{
      \includegraphics[width=0.32\textwidth]{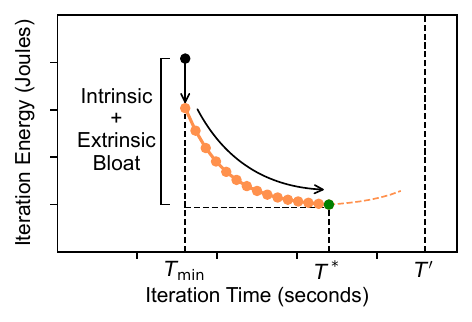}\label{fig:overview-frontier3}
    }
    
    \caption{Three cases \revised{showing} where the straggler pipeline's iteration time $T'$ can be.
    $T_{\min}$ and $T^*$ are the shortest and longest iteration \revised{times} on the \revised{time--energy frontier}.
    The black dot is when all computations run at the maximum speed, which wastes energy.
    The green dot is the energy-optimal iteration time of the non-straggler pipeline.
    Solid orange dots make up the \revised{frontier}, and the orange dotted line shows that iteration energy increases beyond $T^*$.
    }\label{fig:overview-frontiers}
\end{figure*}

\section{Perseus Overview}\label{sec:overview}

We first present Perseus's unified optimization framework that aims to remove both types of energy bloat (\S\ref{sec:overview-optimization}), and then walk through the workflow of Perseus (\S\ref{sec:overview-workflow}).

\subsection{Unified Optimization Framework}\label{sec:overview-optimization}

Intuitively, slowing down computations selectively in a training pipeline without affecting its critical path will keep the same iteration time while reducing its energy consumption (\S\ref{sec:motivation-intrinsic}).
Furthermore, when stragglers emerge, slowing down computations in a non-straggler pipeline without making it a straggler itself will reduce energy consumption even more (\S\ref{sec:motivation-extrinsic}).
We formalize these two intuitions into a unified optimization framework and derive a universal prescription for a non-straggler pipeline's \emph{energy-optimal} iteration time.

Our goal is to minimize a pipeline's energy consumption by controlling the execution speed of each computation in the pipeline.
In doing so, we can slow down a pipeline's iteration time \revised{up to} the straggler's iteration time $T'$:
\begin{equation}
  \begin{aligned}
    \min_F & \quad \textrm{Energy}(F)  \\ 
   \textrm{s.t.} & \quad \textrm{Time}(F) \leq T'
  \end{aligned}
	\label{eq:overview-optimization-problem}
\end{equation}
where $F$ is \revised{an assignment of GPU frequencies}\footnote{The SM frequency of NVIDIA GPUs can be set via NVML~\cite{nvml} in around 10 ms, which is much shorter than typical large model computation latencies. Locking the GPU's frequency provides deterministic computation latency~\cite{nvml,clockwork-osdi20}, making it suitable for tightly planning and packing execution over time.} \revised{to each forward and backward computation in the pipeline}, and $\textrm{Time}(F)$ and $\textrm{Energy}(F)$ are the iteration time and energy consumption of the pipeline when executed with $F$, respectively.
Changing $F$ will lead to different values of $\textrm{Time}(F)$ and $\textrm{Energy}(F)$, but we are only interested in ($\textrm{Time}(F)$, $\textrm{Energy}(F)$) points that are on the \revised{time--energy tradeoff frontier}.

Now, let us assume we have a fully characterized \revised{time--energy frontier}, bookended by $T_{\min}$ and $T^*$ (\revised{Section~\ref{sec:design} is dedicated to describing how}).
$T_{\min}$ is the shortest iteration time on the frontier, which is the same as the iteration time of running every computation at the maximum speed, and $T^*$ is the iteration time with minimum energy consumption, which is when each computation runs at the frequency that consumes the least amount of energy for that computation.\footnote{This is typically not the lowest frequency, because computations running with very low frequencies incur more latency increase than power reduction, resulting in \emph{higher} energy consumption.}
\revised{Figure~\ref{fig:overview-frontiers} shows the three possible cases regarding where the straggler's iteration time $T'$ can be:}
\begin{denseenum}
  \item Figure~\ref{fig:overview-frontier1}: When there are no stragglers, we simply select the point on the frontier with iteration time $T_{\min}$, which reduces \emph{only intrinsic energy bloat}.

  \item Figure~\ref{fig:overview-frontier2}: When a moderately slow straggler is detected, we \emph{additionally reduce extrinsic energy bloat while keeping intrinsic bloat low} by slowing down all non-straggler pipelines until $T'$, using up all the slack time.

  \item Figure~\ref{fig:overview-frontier3}: Finally, the straggler's iteration time may go beyond the minimum-energy point $T^*$ on the frontier. 
    In this case, we only slow down non-stragglers until $T^*$, because going past $T^*$ will instead \emph{increase} energy.  
\end{denseenum}

The three cases can be merged into one universal prescription for the pipeline's energy-optimal iteration time:
\begin{equation}
\begin{aligned}
T_{\textrm{opt}} = \min (T^*, T').
\end{aligned}
\label{eq:overview-optimal-time}
\end{equation}
\revised{Therefore, when a straggler emerges (\ie, $T_{\min} < T'$), Perseus can compute $T_\textrm{opt}$ using Equation~\ref{eq:overview-optimal-time} and quickly look up the frequency plan $F_\textrm{opt}$ that leads to iteration time $T_\textrm{opt}$ using the pre-characterized time--energy frontier.}

\revised{Finally, we note that, unlike other problem settings that do not consider energy consumption, fully utilizing all the slack time created by the straggler \emph{is not always energy-optimal}; being too fast or too slow can both waste energy.}

\subsection{Perseus Architecture}\label{sec:overview-workflow}

\paragraph{Energy Schedule.}
Perseus represents each iteration of the training pipeline as a static directed acyclic graph (DAG), where nodes are forward and backward computations in each stage and edges are dependencies between computations.
Each node on the computation DAG is annotated with its planned time and energy consumption, which we call the \emph{energy schedule}.
Perseus realizes an energy schedule by executing each computation with a specific GPU frequency.

\paragraph{System Components.}
\revised{Perseus's architecture is shown in Figure~\ref{fig:overview-workflow}.
Perseus consists of} a framework- and accelerator-agnostic server and a framework-integrated and accelerator-specific client. 
The server is a cluster-wide singleton.
For various training jobs, the server pre-characterizes the \revised{time--energy} frontier of one iteration (\S\ref{sec:design}) and caches energy schedules for fast lookup.
The client profiles pipeline computations online during training and realizes \revised{energy schedules} by setting the GPU's frequency during runtime (\S\ref{sec:implementation}).

\begin{figure}[t]
  \centering
  \includegraphics[width=0.40\textwidth]{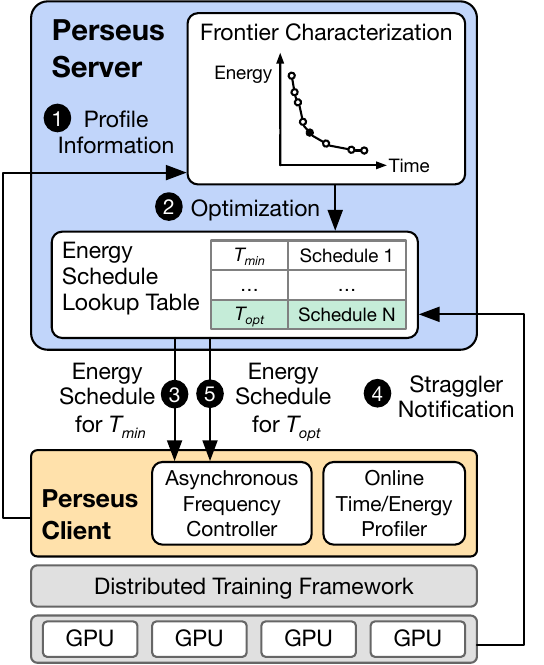}
  \caption{Perseus architecture and workflow.}\label{fig:overview-workflow}
  \vspace{4mm}
\end{figure}

\paragraph{Training Lifecycle.}
For the Perseus server, a training job is primarily specified by its computation DAG for one training iteration.
When the job begins execution, \blackcircled{1} the Perseus client invokes its \revised{online time--energy profiler} (\S\ref{sec:implementation}) to measure the time and energy of each forward and backward computation on each supported frequency.
Profiling is done \emph{in vivo} during the initial tens of training iterations.

Upon receiving profiling results, \blackcircled{2} the server begins \emph{asynchronously} characterizing the \revised{time--energy} frontier (\S\ref{sec:design}) while training continues.
When characterization finishes, energy schedules on the frontier are saved in a lookup table indexed by $T'$.
Then, \blackcircled{3} the energy schedule corresponding to $T_{\min}$ is deployed to the client.
Energy schedules are realized by the client's asynchronous frequency controller, integrated into the training framework (\S\ref{sec:implementation}).

During training, \blackcircled{4} the training infrastructure (\eg, datacenter rack power/temperature manager) notifies the Perseus server of a straggler and its degree of slowdown.
The server then \blackcircled{5} quickly reacts to this by looking up the energy schedule corresponding to the anticipated straggler iteration time (the one with iteration time $T_{\textrm{opt}}$), and deploys it to the client.

\section{Characterizing the Time--Energy Frontier}\label{sec:design}

In this section, we describe our algorithm to efficiently obtain the \revised{time--energy tradeoff frontier} for a training pipeline in detail.
We first formulate the problem, show that it is NP-hard, and describe a relaxed version (\S\ref{sec:design-problem-formulation}).
Then, we provide an overview of our algorithm (\S\ref{sec:design-algorithm-overview}) and describe the core subroutine in our algorithm (\S\ref{sec:design-reducing-time-optimally}).
Finally, we extend our algorithm to support 3D/hybrid parallelism, constant-time operations, and diverse pipeline schedules (\S\ref{sec:design-extensions}).

\subsection{Problem Formulation}\label{sec:design-problem-formulation}

\paragraph{Expression for Energy Consumption.}
The energy consumption of a pipeline is not only from computation; it is the sum of three parts:
(1) Computation;
(2) Blocking on communication between computations; and
(3) Blocking on communication until the straggler pipeline finishes:

\begin{equation}
  \label{eq:design-energy-formulation}
  \medmuskip=3mu
  \thinmuskip=-2mu
  \begin{aligned}
    & \sum_{i}{e_{i}(f_i)} + P_{\textrm{blocking}} ( N \cdot T - \sum_i{t_i(f_i)} ) +  P_{\textrm{blocking}} \cdot N \cdot (T' - T)\\
    & = \underset{\circled{1}}{\underline{
          \sum_{i}{\left( e_{i}(f_i) - P_{\textrm{blocking}} \cdot t_i(f_i) \right)}
        }} + 
        \underset{\circled{2}}{\underline{
          \vphantom{\sum_i}P_{\textrm{blocking}} \cdot N \cdot T'
        }}
  \end{aligned}
\end{equation}
where $P_{\textrm{blocking}}$ is the power consumption of the GPU when it is blocking on communication, $N$ is the number of pipeline stages, and $t_i(f_i)$ and $e_{i}(f_i)$ are the time and energy consumption of computation $i$ with frequency $f_i$, respectively.\footnote{An assumption here is that $P_{\textrm{blocking}}$ is constant, as a GPU blocking on communication is busy-looping inside a NCCL kernel without heavy computation utilization.}

As derived in Section~\ref{sec:overview-optimization}, given straggler iteration time $T'$, we draw a vertical line on the \revised{time--energy tradeoff frontier} ($\textrm{Time}(F)$ vs. \circled{1}+\circled{2}) at $T_\textrm{opt}$ and find $F_\textrm{opt}$ where the two lines intersect.
\revised{Equation~\ref{eq:design-energy-formulation} shows that the time--energy frontier of a pipeline depends on the straggler's iteration time $T'$ \emph{only} in the second term \circled{2}, which is merely an \emph{upward shift} of the frontier.} %
Therefore, if we characterize the \revised{tradeoff frontier} of $\textrm{Time}(F)$ vs. \circled{1}, that frontier can be used to find $T_{\textrm{opt}}$ and $F_{\textrm{opt}}$ for any \revised{straggler iteration time} $T'$. %
Thus, we define
\begin{equation}
  \textrm{Energy}(F) = \sum_{i}{\left( e_{i}(f_i) - P_{\textrm{blocking}} \cdot t_i(f_i) \right)}
\end{equation}
and characterize the frontier of $\textrm{Time}(F)$ vs. $\textrm{Energy}(F)$.

\paragraph{Finding the Time--Energy Frontier.}
Finding \emph{one point} on the Pareto-optimal tradeoff frontier with iteration time $T$ is equivalent to solving the following optimization problem:
\begin{equation}
  \begin{aligned}
    \min_F & \quad \textrm{Energy}(F) \\
    \textrm{s.t.} & \quad \textrm{Time}(F) \leq T
  \end{aligned}
\end{equation}
We call this problem \emph{Pipeline Energy Minimization} (PEM).

\begin{theorem}
  Pipeline Energy Minimization is NP-hard.
\end{theorem}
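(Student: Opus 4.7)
The plan is to prove NP-hardness via a polynomial-time reduction from a classical NP-hard problem. I would begin by writing down the decision version of \textsc{Pipeline Energy Minimization} (PEM): given the pipeline's computation DAG, a finite set of admissible frequencies for each computation with their associated time and energy values, a time budget $T$, and an energy budget $E$, decide whether some frequency assignment $F$ satisfies $\textrm{Time}(F)\le T$ and $\textrm{Energy}(F)\le E$. Two source problems suggest themselves. The first is the Discrete Time--Cost Tradeoff Problem (DTCTP) on activity-on-node networks, whose structure of discrete (duration, cost) modes per activity in a precedence DAG with a project deadline and cost budget mirrors PEM almost verbatim. The second is \textsc{Partition}, which is better suited for forcing structure onto the restricted pipeline DAG that PEM actually uses.

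My first step would be to observe that if the PEM DAG were allowed to be arbitrary, a line-by-line encoding of DTCTP works: map each activity to a computation node, each (duration, cost) mode to a distinct frequency with matching $t_i(f)$ and $e_i(f)$, set $P_{\textrm{blocking}}=0$ so that $\textrm{Energy}(F)=\sum_i e_i(f_i)$, and take the DTCTP deadline and cost bound as $T$ and $E$. A yes-instance of DTCTP is then a yes-instance of PEM and vice versa, so PEM on arbitrary DAGs inherits the known hardness.

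The main obstacle is that the DAG in PEM is not arbitrary: it is a pipeline-schedule DAG consisting of per-stage forward and backward computations with sequential within-stage edges and cross-stage data dependencies imposed by a fixed schedule such as 1F1B. To handle this, I would switch to a direct reduction from \textsc{Partition}. Given positive integers $a_1,\dots,a_n$ summing to $2S$, I would construct a pipeline with a small constant number of stages and $n$ microbatches, where each microbatch $i$ offers two frequency choices whose time difference equals $a_i$ and whose energies are matched so that only the assignment's balance, not its identity, matters. By aligning bubbles and critical-path segments so that the total iteration time equals a fixed constant plus the absolute difference between the ``fast'' and ``slow'' totals across two parallel branches of the schedule, I would pick $T$ so that meeting the time budget corresponds exactly to splitting the $a_i$ into two equal-sum subsets, and pick $E$ so that the energy budget is tight precisely at such a split.

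The delicate step, and where I expect most of the work to go, is arguing that the pipeline's critical path really tracks the partition under every frequency assignment: since iteration time in a pipeline is the maximum over several chains of forward/backward computations, the gadget must suppress all alternative chains so that only the intended bottleneck binds, regardless of how an adversary chooses the remaining frequencies. If the direct gadget turns out to be too rigid given 1F1B semantics, my fallback is to retain the DTCTP reduction and embed its general activity DAG as a sub-DAG inside a sufficiently wide multi-stage pipeline, padding all other computations with constant-time, constant-energy frequency choices so that they can never enter the critical path and do not alter the cost comparison.
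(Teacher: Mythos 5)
Your overall instinct---reduce from a knapsack/partition-type problem---is in the same family as the paper's, but your plan as written has a genuine gap. Your primary route hinges on a Partition gadget in which the iteration time of a multi-stage 1F1B pipeline equals a constant plus the difference between the ``fast'' and ``slow'' totals on two parallel branches; you yourself flag that forcing the critical path to track the partition under \emph{every} frequency assignment is the delicate step, and that step is exactly where the entire difficulty lives---it is never constructed, so the reduction is not established. Your fallback is also broken as stated: the precedence structure of a pipeline-schedule DAG is, up to reachability, a two-dimensional grid order (computation $(s,m)$ can reach $(s',m')$ essentially iff $s \le s'$ and $m \le m'$), so an arbitrary DTCTP activity network cannot be embedded as a sub-DAG by padding with constant-time nodes; arbitrary partial orders have unbounded order dimension and simply do not sit inside that grid.

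The paper avoids all of this by picking a far more degenerate special case than you allow yourself: a \emph{single-stage} pipeline with two admissible frequencies per computation. With one stage the DAG is a chain, so $\textrm{Time}(F)$ is just the sum of the chosen per-computation times and $\textrm{Energy}(F)$ the sum of the chosen energies; the decision problem becomes a two-choice selection problem, and 0/1 Knapsack reduces to it directly by setting $t_i(H)=e_i(H)=0$, $t_i(L)=w_i$, $e_i(L)=-v_i$, time budget $C$, and energy budget $-P$. NP-hardness of this special case immediately gives NP-hardness of PEM as stated. By insisting that the hard instances be honest multi-stage 1F1B pipelines you are attempting a stronger statement than the theorem claims, and you would still owe the critical-path gadget; alternatively, observe that DTCTP is already NP-hard on pure series networks (where it is multiple-choice knapsack), which collapses your fallback to essentially the paper's chain argument.
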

\begin{proof}
  Reduction from Knapsack. Details in Appendix~\ref{sec:appendix-np-hardness}.
\end{proof}

The complete Pareto-optimal tradeoff frontier can be obtained by solving PEM for all $T \in [T_{\min}, T^*]$, which is clearly intractable.
Therefore, we seek an appropriate relaxation of the problem that will yield a \emph{nearly} Pareto-optimal frontier.

One of the reasons PEM is NP-hard is because it is a \emph{discrete} optimization problem where the possible choices of computation time and energy are discrete, which is in turn because GPUs only support discrete frequencies (\eg, in 15 MHz steps and nothing in the middle).
However, if frequency choices were \emph{continuous}, the problem is exactly and efficiently solvable~\cite{skutella1998}.
This is akin to integer linear programs becoming tractable when relaxed to linear programs.

The transform from the original problem to the relaxed version is done by fitting a \emph{continuous} exponential function ($a \cdot e^{bt} + c$) to Pareto-optimal computation time and energy measurements for each forward and backward computation.
We choose the exponential function due to its inherent flexibility and natural fit to data (\revised{more} details in Appendix~\ref{sec:appendix-relaxation}).
We show in Section~\ref{sec:eval-reducing-energy-bloat} that this relaxation produces high-quality approximate solutions that \revised{realize} most of the opportunity for savings.
Solving the relaxed problem returns the time and energy consumption planned for each computation in the pipeline, or the energy schedule.
Then, this is transformed back to a feasible solution of the original problem, which is the set of GPU frequencies $F$.

\subsection{Iteratively Discovering the Frontier}\label{sec:design-algorithm-overview}

\begin{figure}
\centering
  \includegraphics[width=0.37\textwidth]{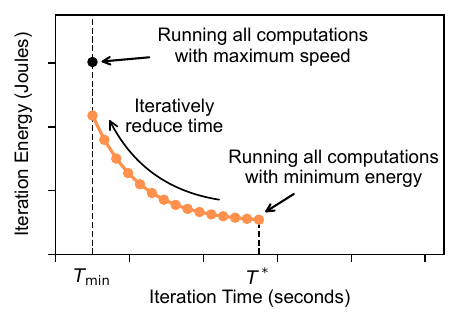}
  \vspace{-3mm}
  \caption{Starting from the energy schedule that consumes the minimum energy, we iteratively reduce its iteration time to trace up and iteratively discover the tradeoff frontier.}\label{fig:design-frontier-navigation}
\end{figure}

Now, we first describe our iterative strategy of characterizing the frontier, and dive deeper into one iteration in Section~\ref{sec:design-reducing-time-optimally}.

Although our relaxed problem is no longer NP-hard, solving it for each $T' \in [T_{\min}, T^*]$ from scratch is inefficient.
Instead, what if we can \emph{tweak} an existing \revised{schedule already on the frontier} to generate its \emph{neighbor} energy schedule on the frontier?
Then, we can start from one end of the frontier and trace along to the other end, discovering \revised{fine-grained optimized energy schedules}. %

Figure~\ref{fig:design-frontier-navigation} visualizes our strategy.
We start from the rightmost point $T^*$ that consumes the \emph{minimum energy}, which is simply running every computation with the minimum energy.\footnote{The minimum energy consumption for each computation type can be queried from the computation time/energy profiling information (\S\ref{sec:implementation}).}
This energy schedule is in fact Pareto-optimal because there are no other schedules that achieve the same energy with faster time.
Then, we iteratively reduce iteration time by unit time $\tau$ (\eg, 1 ms) while increasing total energy \emph{minimally}, which gives us the neighbor energy schedule on the frontier.\footnote{$\tau$ is the unit time parameter that trades off the running time of Perseus's optimizer and the granularity of energy schedules discovered by Perseus.}
This is repeated until iteration time reaches $T_{\min}$.

We note that tracing down from the energy schedule that consumes the maximum energy (\ie, Figure~\ref{fig:design-frontier-navigation} black dot) would be incorrect.
That schedule is far from optimized because, although it will execute with the least amount of time, stage imbalance leaves room for energy reduction (\S\ref{sec:motivation-intrinsic}).

\begin{algorithm}[t]
  \algrule{}

  \KwIn{DAG $\mathcal{G}$ of computations $i \in \mathcal{G}$\newline
        Amount of time to reduce in one iteration $\tau$\newline
        Iteration time with all max frequencies $T_{\min}$
  }
  \KwOut{Set of \revised{optimized} schedules $\mathcal{S}$}

  \algrule{}

  \Comment{Begin with the minimum energy schedule}
  $s \leftarrow$ Minimum energy for all computations\;\label{algoline:design-min-energy-plan}
  $\mathcal{S} \leftarrow \{ s \}$\;

  \While{$\mathrm{IterationTime}(\mathcal{G}, s) > T_{\min}$}{
    \Comment{Reduce time by $\tau$ with minimal energy increase (\S\ref{sec:design-reducing-time-optimally})}
    $s \leftarrow$ GetNextSchedule($\mathcal{G}$, $s$, $\tau$)\label{algoline:design-reducing-time-optimally}\;
    $\mathcal{S} \leftarrow \mathcal{S} \cup \{ s \}$ \;
  }

  \Return{$\mathcal{S}$}

  \algrule{}

  \caption{Iteratively discovering the frontier.}\label{algo:design-overview}
\end{algorithm}

Algorithm~\ref{algo:design-overview} provides an overview of our optimization process.
First, the energy schedule with the minimum energy consumption is constructed by planning every computation to run with minimum energy (line~\ref{algoline:design-min-energy-plan}).
Starting from there, the iteration time of the schedule is iteratively reduced by unit time $\tau$ \emph{while incurring minimal energy increase} (line~\ref{algoline:design-reducing-time-optimally}; Section~\ref{sec:design-reducing-time-optimally}).
This is repeated until the total iteration time of the schedule can no longer be reduced, and every energy schedule encountered in the process forms our frontier.

\subsection{Finding the Neighbor Energy Schedule}\label{sec:design-reducing-time-optimally}

\begin{figure*}[t!]
  \centering
  \includegraphics[width=0.85\textwidth]{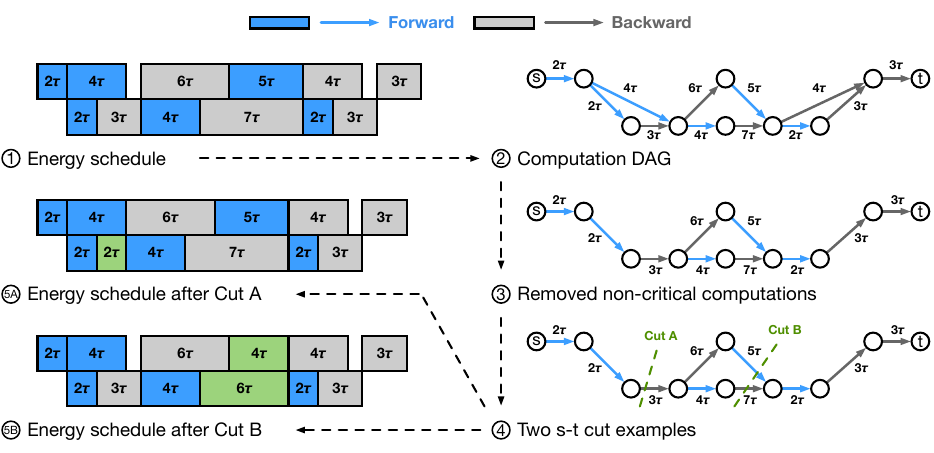}
  \vspace{-3mm}
  \caption{A simplified example of how to reduce iteration time by unit time $\tau$.
Given a 1F1B pipeline schedule with 2 stages and 3 microbatches ({\normalfont\protect\circled{1}}), it is first transformed to an equivalent representation of computation DAG ({\normalfont\protect\circled{2}}). 
Then the Critical DAG ({\normalfont\protect\circled{3}}) is obtained by \revised{considering only} the computations on the critical path.
Our key observation is that any valid s-t cut on the Critical DAG will reduce the iteration time by unit time $\tau$.
Cut A and Cut B are two examples of valid s-t \revised{cuts} ({\normalfont\protect\circled{4}}).
Either reducing the one computation associated with Cut A  ({\scalebox{0.8}{\small\protect\circled{5A}}}) or reducing the two computations associated with Cut B  ({\scalebox{0.8}{\small\protect\circled{5B}}}) reduces the iteration time by $\tau$.
}\label{fig:design-dags}
\end{figure*}

In this section, we describe our core subroutine GetNextSchedule (Algorithm~\ref{algo:design-overview}, line~\ref{algoline:design-reducing-time-optimally}).
Figure~\ref{fig:design-dags} provides visualizations of the process.
The entire procedure is given in Algorithm~\ref{algo:design-reduce-time-optimally}.

\paragraph{Node- and Edge-Centric Computation DAGs.}
Originally, Perseus's representation of the computation DAG is node-centric, which has forward and backward computations as nodes and their dependencies as edges.
As a setup for subsequent steps, we convert this into an edge-centric computation DAG where computations are edges and dependencies are nodes (\ie, all incoming edges must complete before any outgoing edge can begin).
This conversion can be done by splitting each node into two and connecting the two with an edge annotated with the computation on the original node.

\paragraph{Removing Non-Critical Computations.}
\looseness=-1
Our goal is to reduce the execution time of the computation DAG by $\tau$, which is equivalent to reducing the length of \emph{all critical paths} by $\tau$.\footnote{Let's say there are two critical paths that run in parallel. They must be of equal length to both be critical paths. Here, if only one were shortened, the other will remain the sole critical path and the DAG will not execute faster.}
Since computations that are not on any critical path (\ie, non-critical computations) do not affect the length of the critical path, we remove them from the computation DAG.\@

\paragraph{Finding Computations to Speed Up.}
Which computations on the DAG should we speed up in order to reduce the length of all critical paths by $\tau$?
The key observation is that \emph{any} s-t cut on the computation DAG represents a way to reduce the execution time of the DAG by $\tau$.
Specifically, by speeding up the computations on all cut edges by $\tau$, the entire computation DAG can be sped up exactly by $\tau$.

Figure~\ref{fig:design-dags} shows two examples of this.
\circled{4} shows two valid s-t cuts: \emph{Cut A} and \emph{Cut B}.
\circled{\footnotesize{5A}} speeds up the computation edge cut by \emph{Cut A} from $3\tau$ to $2\tau$, and the iteration time of the energy schedule was reduced by $\tau$.
Similarly, \circled{\footnotesize{5B}} speeds up the computation edges cut by \emph{Cut B} from $5\tau$ to $4\tau$ and from $7\tau$ to $6\tau$, and the iteration time of the energy schedule was also reduced by $\tau$.
Especially, in the second case, iteration time was only reduced because computations on two parallel critical paths were sped up \emph{together}.

\paragraph{Solving with Minimum Cut.}
We have seen that any s-t cut represents a way to speed up the entire DAG by $\tau$.
But speeding up computations increases energy.
Then, a natural question is, which cut brings the smallest energy increase?

\revised{We can precisely map the \emph{flow capacity} of an s-t cut to the amount of \emph{energy increase} from speeding up cut edges.
That is, by finding the amount of energy increase each computation will incur with the slope of its exponential function (\S\ref{sec:design-problem-formulation}) and defining it to be the edge's flow capacity, we can reduce our problem to minimum cut, which we can solve with maximum flow.}
After finding the minimum cut, we modify the durations of the computations involved in the cut, obtaining the neighbor energy schedule.
Appendix~\ref{sec:appendix-min-cut} provides details with mathematical expressions for flow capacities.

\paragraph{Converting Back to GPU Frequencies.}
\looseness=-1
Finally, we convert the energy schedule into GPU frequencies that can be realized by the Perseus client.
For each computation, we convert its planned execution time $t$ to the slowest GPU frequency that will execute \emph{faster} than $t$. %
This is because when computations are tightly packed by our algorithm, while slightly speeding up a computation is acceptable, slowing down \emph{any computation} on the critical path will directly slow down the entire DAG, increasing intrinsic energy bloat.

\begin{table*}[t]
  \centering
  \begin{tabular}{lp{10cm}}
      \toprule
      \textbf{API} & \textbf{Description} \\
      \midrule
      \texttt{profiler.begin(type)} & Begin time and energy profiling for computation \texttt{type}. \\
      \texttt{profiler.end(type)} & Record time and energy profiling results for computation \texttt{type}. \\
      \texttt{controller.set\_speed(type)} & Set the hardware's execution speed as planned for computation \texttt{type}. \\
      \texttt{server.set\_straggler(id, delay, degree)} & Notify that a straggler is anticipated after \texttt{delay} seconds. A straggler returning to normal can be communicated by setting \texttt{degree} to 1. \\
      \bottomrule
  \end{tabular}
  \caption{The minimal set of Perseus client and server APIs that require implementation. One client process manages each accelerator. The \texttt{type} parameter should be either \texttt{"forward"} or \texttt{"backward"}. On GPUs, the ``speed'' control knob is the SM frequency. \texttt{set\_straggler} is invoked by the infrastructure with the \texttt{id} of an accelerator to notify the server via HTTP/RPC.}\label{tab:implementation-client-api}
\end{table*}

\paragraph{Time Complexity Analysis.}
Our optimization algorithm has polynomial runtime. 
Let $N$ and $M$ denote the number of stages and microbatches, respectively.
Then, the computation DAG will have $O(NM)$ number of nodes and edges, and maximum flow with Edmonds-Karp runs in $O(N^3M^3)$.
While for general DAGs the total number of steps is known to be exponential to the size of the DAG~\cite{skutella-thesis}, we prove that for DAGs that represent pipeline computations, the number of steps is $O(N + M)$, yielding a final polynomial time complexity of $O((N + M)N^3M^3)$.
See Appendix~\ref{sec:appendix-polytime} for proof.

In reality, commonly used number of stages ($N$) is 4 to 8 (at most tens) to reduce pipeline bubble ratio~\cite{megatronlm-sc21,mlperf_training_31}.
Number of microbatches ($M$) is typically around $4 N$~\cite{gpipe-neurips19,megatron-turing-arxiv22}, but \revised{recently with} high data parallel degree, far \revised{fewer} have been reported even for high-performance settings~\cite{mlperf_training_31}.
As such, algorithm runtime is practically negligible (\S\ref{sec:eval-overhead}), especially given that large model training easily takes weeks or months~\cite{patterson2021carbon}.

\begin{algorithm}[t!]
  \algrule{}

  \KwIn{DAG $\mathcal{G}$ of computations $i \in \mathcal{G}$\newline
        Current energy schedule $s$\newline
        Amount of iteration time to reduce $\tau$
  }
  \KwOut{Neighbor schedule with reduced time $s'$}

  \algrule{}

    \Comment{Construct edge-centric computation DAG (\circled{2})}
    $\mathcal{G} \leftarrow$ Split nodes into two and connect with edge\;

    \Comment{Find and remove non-critical computations (\circled{3})}
    Annotate earliest \& latest start times for $\forall i \in \mathcal{G}$\;
    \For{$i \in \mathcal{G}$}{
      \If{$i$ \upshape has different earliest and latest start}{
        Remove $i$ from $\mathcal{G}$\;
      }
    }

    \Comment{Find set of computations to modify (\circled{4})}
    $S$, $T \leftarrow$ FindMinCut($\mathcal{G}$, $s$)\;

    \Comment{Modify computation durations (\circled{5})}
    Modify duration of $\forall i$ in $S - T$ cut by $\tau$ \;

    \Comment{Assign frequencies from planned computation times}
    $s' \leftarrow$ $\min f_i$ that runs no slower than planned\;

    \Return{$s'$}

  \algrule{}

  \caption{GetNextSchedule: Reducing the execution time of the DAG by $\tau$ with minimal energy increase.}\label{algo:design-reduce-time-optimally}
\end{algorithm}

\subsection{Generalizations}\label{sec:design-extensions}

In this section, we present generalizations to our optimization algorithm useful for planning large model training.

\paragraph{3D/Hybrid Parallelism.}
Operator parallelism techniques (e.g., data, tensor, or sequence parallelism) split operations in \emph{equal sizes}, resulting in each GPU running the same computation.
This allows Perseus to profile only one GPU per stage, decide the energy schedule for that GPU, and replicate it to all other GPUs in the same stage.
We show that Perseus works well for 3D parallelism in Section~\ref{sec:eval-frontier}.

\paragraph{Constant-Time Operations.}
\revised{There are operations in the training pipeline that may take non-trivial latency, other than computation and blocking on communication.}
For instance, loading and copying input data into VRAM or communication over slower links can take considerable latency.
However, the time and energy consumption of these operations are not affected by the GPU's frequency.
Perseus can take constant-time operations into account during planning by viewing them as a node with only one frequency choice.

\paragraph{Other Pipeline Schedules.}
There are various schedules for pipeline parallel training, including GPipe~\cite{gpipe-neurips19}, 1F1B~\cite{pipedreamflush-icml21}, interleaved 1F1B~\cite{megatronlm-arxiv}, and early recomputation 1F1B~\cite{merak-tpds23}.
As long as the computations on the schedule can be expressed as a DAG, Perseus can optimize its energy consumption without modification.
As long as there is stage imbalance, any pipeline schedule will have intrinsic energy bloat.

\section{Implementation}\label{sec:implementation}

The Perseus server and client are implemented in Python.
Perseus can optimize any training infrastructure, framework, and accelerator as long as the APIs in Table~\ref{tab:implementation-client-api} can be implemented, and the accelerator supports multiple execution speeds that trade off computation time and energy.

As a reference, we have integrated the Perseus client with Merak~\cite{merak-tpds23}, which marries high-performance tensor parallelism of Megatron-LM~\cite{megatronlm-github} and the generic pipeline execution engine of DeepSpeed~\cite{deepspeed-github}.
While training engine implementations differ widely, many have separate code blocks for forward and backward, allowing them to be wrapped with the \texttt{profiler} APIs.
We provide an example of what is looks like to integrate the client with a training engine in Appendix~\ref{sec:appendix-integration}.
Activation recomputation~\cite{checkpointing-arxiv16} is enabled to allow large batch sizes to fit in GPUs.

\paragraph{Profiler.}
Accurate profiling is important to our optimization algorithm; inaccurate latency profiles (especially underestimations) may slow down the end-to-end latency of the DAG, whereas inaccurate energy profiles can lead the algorithm to incorrectly select computations to speed up.

Fortunately, the latency of a fixed set of GPU computations, especially with the GPU's frequency locked, is known to be very stable~\cite{nvml,clockwork-osdi20}.
Furthermore, to ensure that profiling results are representative of real training, the Perseus client profiles the time and energy of each forward and backward computation at the beginning of the training job \emph{in vivo}.
Each supported GPU frequency is profiled one by one from the highest to the lowest for about five iterations (more if one iteration has less microbatches).
After a certain frequency, lower frequencies result in both more time \emph{and} energy consumed, making them strictly suboptimal compared to higher frequencies.
Profiling is terminated at that point.

Finally, we profile $P_{\textrm{blocking}}$ using two GPUs.
One GPU blocks on P2P communication and the other sleeps, and we measure the power consumption of the blocking GPU.\@
It is sufficient to profile $P_{\textrm{blocking}}$ \emph{once per GPU model}.

\paragraph{Asynchronous Frequency Controller.}
The client-side controller spawns a separate process that asynchronously sets the GPU's frequency through NVML~\cite{nvml} without blocking the main training process.
Training frameworks can call \texttt{set\_speed} at the beginning of forward or backward to set the GPU's frequency as planned by the server.

\section{Evaluation}\label{sec:eval}

We evaluate Perseus on five workloads and compare it against EnvPipe and Zeus. 
Our key findings are the following:

\begin{denseitemize}
  \item Perseus can effectively reduce intrinsic and extrinsic energy bloat.
  Training on real GPUs shows up to 28.5\% energy savings using Perseus (\S\ref{sec:eval-reducing-energy-bloat}).
  
  \item In emulated large-scale training scenarios, Perseus significantly outperforms the baselines by consistently providing up to 30\% energy savings (\S\ref{sec:eval-large-scale-emulation}).

  \item Energy bloat reduction \revised{is} possible because Perseus can enumerate efficient energy schedules on the \revised{time--energy} frontier (\S\ref{sec:eval-frontier}).

  \item Perseus reduces energy bloat with low overhead (\S\ref{sec:eval-overhead}).
\end{denseitemize}

\subsection{Experimental Setup}\label{sec:eval-exp-setup}

\paragraph{Testbed.}
We run our evaluation workloads in a GPU cluster, where each node is equipped with an AMD EPYC 7513 CPU, 512 GB DRAM, and four NVIDIA A40-48G GPUs.\@
For A100 results, we use a node provided by Chameleon Cloud~\cite{chameleoncloud}, equipped with two Intel Xeon Platinum 8380 CPUs, 512 GB DRAM, and four NVIDIA A100-80G PCIe GPUs.

\paragraph{Workloads and experiment parameters.}
We evaluate Perseus with various workloads spanning from GPT-3~\cite{gpt3}, Bloom~\cite{bloom-arxiv22}, BERT~\cite{bert}, T5~\cite{t5-jmlr20}, to Wide-ResNet~\cite{wideresnet-bmvc16}.
We use model variants with 1.3B to 6.7B parameters to run the models in our testbed, and scale them up to 176B parameters in large-scale emulation.
We chose the microbatch size and number of microbatches that yield the highest throughput given the global batch size.
We use the minimum imbalance stage partitioning method described in Section~\ref{sec:motivation-intrinsic} for all workloads.
Appendix~\ref{sec:appendix-workloads} lists complete model configurations, parameters, and stage partitioning details.

\paragraph{Metrics.}
We report GPU energy reduction and slowdown of a training iteration (\%) relative to using all maximum GPU frequencies.
In most cases slowdown is close to zero, in which case energy and average power reductions coincide.
Reducing only extrinsic bloat is not possible, because Perseus reduces extrinsic bloat \emph{while keeping} intrinsic bloat low as it slows down non-straggler pipelines.
Therefore, we report
(1) intrinsic bloat reduction \emph{without} stragglers and
(2) intrinsic + extrinsic bloat reduction \emph{with} stragglers.

\paragraph{Baselines.}
We mainly compare with two prior works:
\begin{denseitemize}
  \item \textbf{EnvPipe}~\cite{envpipe-atc23} reduces only intrinsic energy bloat while trying to minimize slowdown.
  We compare Perseus's energy bloat reduction with EnvPipe (\S\ref{sec:eval-reducing-energy-bloat}, \S\ref{sec:eval-large-scale-emulation}).
  \item \textbf{Zeus}~\cite{zeus-nsdi23} characterizes the time--energy tradeoff of single GPU training.
    We compare Perseus's \revised{time--energy frontier} against that of Zeus (\S\ref{sec:eval-frontier}).
\end{denseitemize}

\subsection{Reducing Energy Bloat}\label{sec:eval-reducing-energy-bloat}

We start with overall energy bloat \revised{reduction---intrinsic bloat without stragglers (\S\ref{sec:eval-intrinsic}) and intrinsic + extrinsic bloat with stragglers (\S\ref{sec:eval-intrinsic-extrinsic})---achieved} by Perseus and EnvPipe.
All numbers were obtained by running on testbed GPUs.
All solutions use the same amount of GPU hardware resources.

\subsubsection{Intrinsic Bloat Reduction Without Stragglers}\label{sec:eval-intrinsic}

\begingroup
\setlength{\tabcolsep}{4pt}

\begin{table}[t]
  \footnotesize
  \centering
  \begin{subtable}[c]{0.45\textwidth}
  \centering
  \begin{tabular}{lrrrr}
    \toprule
    \multirow{2}{*}{\textbf{Model}} & \multicolumn{2}{c}{\textbf{Energy Savings (\%)}} & \multicolumn{2}{c}{\textbf{Slowdown (\%)}} \\
                   & Perseus & EnvPipe & Perseus & EnvPipe \\
    \midrule
    GPT-3 1.3B        & 13.2 & 8.8 & 0.1 & 0.1 \\
    BERT 1.3B         & 12.9 & 8.0 & 0.5 & 0.0 \\
    T5 3B             & 10.6 & 7.4 & 1.3 & 3.4 \\
    Bloom 3B          & 11.7 & 8.9 & 0.2 & 0.2 \\
    Wide-ResNet 1.5B  & 3.2  & 3.7 & 2.3 & 4.1 \\
    \bottomrule
  \end{tabular}
  \subcaption{Four stage pipeline parallelism on A100 GPUs}
  \end{subtable}

  \bigskip

  \begin{subtable}[c]{0.45\textwidth}
    \centering
    \begin{tabular}{lrrrr}
      \toprule
      \multirow{2}{*}{\textbf{Model}} & \multicolumn{2}{c}{\textbf{Energy Savings (\%)}} & \multicolumn{2}{c}{\textbf{Slowdown (\%)}} \\
                     & Perseus & EnvPipe & Perseus & EnvPipe \\
      \midrule
      GPT-3 2.7B       & 21.1 & 21.7 & 0.2 & 5.6 \\
      BERT 1.3B        & 15.7 & 16.5 & 0.0 & 9.7 \\
      T5 3B            & 28.5 & 19.3 & 0.0 & 0.0 \\
      Bloom 3B         & 22.4 & 19.9 & 0.0 & 0.0 \\
      Wide-ResNet 1.5B & 20.4 & 16.5 & 0.2 & 0.5 \\
      \bottomrule
    \end{tabular}
    \subcaption{Eight stage pipeline parallelism on A40 GPUs}
  \end{subtable}

\caption{[Experiment] Intrinsic energy bloat (without stragglers) reduction and iteration time slowdown.}\label{tab:eval-frontier-min-slowdown}
\end{table}

\endgroup

Table~\ref{tab:eval-frontier-min-slowdown} compares the energy savings achieved by Perseus's minimum iteration time energy schedule (leftmost point of the \revised{time--energy} frontier) and that by EnvPipe.

We make two observations regarding Perseus.
First, models show varying amounts of energy savings because
(1) their stage \revised{imbalances} vary (Table~\ref{tab:motivation-imbalance}), and
(2) their forward and backward are composed of different computations, which affects time/energy sensitivity when changing the frequency.
For instance, unlike other models, Wide-ResNet 1.5B on A100 after minimum imbalance stage partitioning has nearly perfect stage balance, resulting in low intrinsic energy bloat.
However, as will be seen in Section~\ref{sec:eval-intrinsic-extrinsic}, such models tend to achieve greater extrinsic bloat savings because most of their computations run at a high frequency, and slowing them down due to stragglers leads to higher energy reduction.

Second, A40 demonstrates more energy savings compared to A100.
This is because the dynamic clock frequency range of A100 (210--1410 MHz) is smaller than that of A40 (210--1740 MHz).
Thus, tuning down the GPU's frequency yields a relatively smaller change in computation time and energy compared to those at the maximum frequency.
However, we expect the more recent GPUs to have better percentage savings due to higher maximum frequency (e.g., 1980 MHz for H100 SXM~\cite{H100whitepaper}) and better absolute savings due to high TDP (e.g., 1,200 W for each GPU on GB200~\cite{blackwell-news}).

EnvPipe in general provides lower energy savings, primarily due to its assumption that the final stage of a pipeline is always the heaviest.
This is only correct with a probability of $1/N$, where $N$ is the number of pipeline stages.
Additionally, it sometimes considerably degrades iteration time because it is not aware of \emph{single-choice operations} inside the pipeline (\S\ref{sec:design-extensions}) and can slow down some computations too much.

\subsubsection{Intrinsic + Extrinsic Bloat Reduction With Stragglers}\label{sec:eval-intrinsic-extrinsic}

\begin{table}[t]
  \footnotesize
  \centering
  \begin{subtable}[c]{0.45\textwidth}
    \centering
    \begin{tabular}{ll|r@{\hspace{1em}}r@{\hspace{1em}}r@{\hspace{1em}}r@{\hspace{1em}}r@{\hspace{1em}}r}
      \toprule
      \textbf{Model}      & \multirow{2}{*}{\textbf{Method}} & \multicolumn{6}{c}{\textbf{Energy Savings (\%) given $T' / T$}} \\
      \textbf{\# Params}  &      & 1.05 & 1.1 & 1.2 & 1.3 & 1.4 & 1.5 \\
      \midrule
      GPT-3     & Perseus  & 14.7 & 15.9 & 15.5 & 15.0 & 14.6 & 14.3 \\
      1.3B      & EnvPipe  & 8.7  & 8.5  & 8.3  & 8.1  & 7.9  & 7.7  \\
      \midrule
      Bloom     & Perseus & 13.6 & 15.6 & 15.2 & 14.7 & 14.3 & 14.0 \\
      3B        & EnvPipe &  8.8  & 8.7  & 8.4  & 8.2  & 8.0  & 7.8 \\
      \midrule
      BERT      & Perseus  & 14.9 & 16.9 & 16.4 & 15.9 & 15.5 & 15.0 \\
      1.3B      &  EnvPipe & 7.9  & 7.8  & 7.5  & 7.3  & 7.1  & 6.9 \\
      \midrule
      T5        & Perseus  & 15.3 & 18.0 & 17.9 & 17.4 & 16.9 & 16.5 \\
      3B        & EnvPipe  & 8.4  & 8.2  & 8.0  & 7.8  & 7.6  & 7.4 \\
      \midrule
      Wide-ResNet & Perseus & 9.4  & 12.7 & 12.6 & 12.3 & 12.0 & 11.6 \\
      1.5B        & EnvPipe & 4.9  & 4.8  & 4.7  & 4.5  & 4.4  & 4.3 \\
      \bottomrule
    \end{tabular}
    \subcaption{Four stage pipeline parallelism on A100 GPUs}
  \end{subtable}

  \bigskip

  \begin{subtable}[c]{0.45\textwidth}
    \centering
    \begin{tabular}{ll|r@{\hspace{1em}}r@{\hspace{1em}}r@{\hspace{1em}}r@{\hspace{1em}}r@{\hspace{1em}}r}
      \toprule
      \textbf{Model}     & \multirow{2}{*}{\textbf{Method}} & \multicolumn{6}{c}{\textbf{Energy Savings (\%) given $T' / T$}} \\
      \textbf{\# Params} &  & 1.05 & 1.1 & 1.2 & 1.3 & 1.4 & 1.5 \\
      \midrule
      GPT-3       & Perseus & 24.5 & 26.0 & 25.9 & 25.2 & 24.6 & 24.0 \\
      2.7B        & EnvPipe & 22.9 & 22.6 & 22.0 & 21.4 & 20.9 & 20.4 \\
      \midrule
      Bloom       & Perseus & 25.5 & 26.4 & 25.9 & 25.2 & 24.6 & 24.0 \\
      3B          & EnvPipe & 19.6 & 19.3 & 18.8 & 18.3 & 17.8 & 17.4 \\
      \midrule
      BERT        & Perseus & 20.0 & 22.6 & 24.1 & 23.4 & 22.8 & 22.2 \\
      1.3B        & EnvPipe & 19.2 & 18.9 & 18.3 & 17.8 & 17.4 & 16.9 \\
      \midrule
      T5          & Perseus & 27.9 & 27.3 & 26.2 & 25.2 & 24.3 & 23.4 \\
      3B          & EnvPipe & 18.4 & 18.0 & 17.3 & 16.6 & 16.0 & 15.4 \\
      \midrule
      Wide-ResNet & Perseus & 24.3 & 26.2 & 26.3 & 25.7 & 25.0 & 24.4 \\
      1.5B        & EnvPipe & 16.4 & 16.2 & 15.8 & 15.4 & 15.0 & 14.6 \\
    \bottomrule
  \end{tabular}
    \subcaption{Eight stage pipeline parallelism on A40 GPUs}
  \end{subtable}

\caption{[Experiment] Energy savings given varying straggler slowdown ($T' / T$).
Perseus can reduce extrinsic bloat while keeping intrinsic bloat low, whereas EnvPipe cannot.}\label{tab:eval-frontier-varying-slowdown}
\end{table}

\looseness=-1
When stragglers create extrinsic energy bloat, the amount of energy savings depends on how much energy reduction the \revised{time--energy} frontier yields for longer iteration times.
Table~\ref{tab:eval-frontier-varying-slowdown} shows the amount of energy savings of a non-straggler pipeline given varying straggler slowdowns.
For a given slowdown factor ($T' / T$), a non-straggler pipeline's iteration time is set to be $T_\textrm{opt} = \min(T^*, T')$ (\S\ref{sec:overview-optimization}), and energy reduction comes from (1) slowing down the pipeline itself and (2) reducing the time (and energy) blocking on communication, waiting for the straggler.

The percentage of savings \revised{initially} increases with the straggler's iteration time, but then \revised{gradually decreases} as it slows down beyond $T^*$.
This is expected.
The absolute amount of energy reduction in Joules is the largest when the straggler's iteration time is $T^*$ and constant afterward, because Perseus does not slow down non-straggler pipelines beyond $T^*$ (\S\ref{sec:overview-optimization}).
Thus, as the straggler slows down beyond $T^*$, additional time and energy is consumed while waiting for the straggler, lowering the \emph{percentage} of energy savings.

Finally, the point of maximum energy savings is different for each model.
This is because each model has a different $T^*$ value, which is determined by how much each stage's computation slows down on the minimum-energy frequency.

\subsubsection{How Much Potential Saving Was Realized?}

The largest possible savings under our problem setting occurs when running every computation at their minimum-energy frequencies (\ie, the $T^*$ point on the \revised{time--energy} frontier).
For intrinsic bloat without stragglers, Perseus \revised{realizes on average} 74\% and 89\% \revised{of the potential savings} on A100 and A40, respectively, with negligible slowdown.
This is possible because there are much more non-critical computations in the DAG that can be slowed down than critical ones.
With stragglers, Perseus fully realizes potential savings when the straggler's slowdown degrees are on average 1.1 and 1.15 on A100 and A40 respectively, which is not unrealistic considering slowdowns reported in literature (\S\ref{sec:motivation-extrinsic}).

\subsection{Large-Scale Emulation}\label{sec:eval-large-scale-emulation}

Because we do not have access to a GPU cluster required to run huge models like GPT-3 175B, we use \emph{emulation} grounded on fine-grained profiling for large-scale evaluation.
In general, trends in our emulation result match those obtained from real training in Section~\ref{sec:eval-reducing-energy-bloat}.

\begin{table}[t]
  \footnotesize
  \centering
  \begin{tabular}{cccc}
    \toprule
    \textbf{\begin{tabular}[c]{@{}c@{}}\# GPUs\end{tabular}} & \textbf{\# Pipelines} & \textbf{\begin{tabular}[c]{@{}c@{}}\# Microbatches\\ Per Pipeline\end{tabular}} & \textbf{\begin{tabular}[c]{@{}c@{}}Global\\ Batch Size\end{tabular}} \\
    \midrule
    1024                   & 16                    & 96                                                                               & \multirow{4}{*}{1536}            \\
    2048                   & 32                    & 48                                                                               &                                  \\
    4096                   & 64                    & 24                                                                               &                                  \\
    8192                   & 128                   & 12                                                                               &                                  \\
    \bottomrule
  \end{tabular}
  \caption{Strong scaling parameters for large-scale emulation.
  A pipeline has tensor parallel degree 8 and 8 pipeline stages.}\label{tab:eval-emulation-strong-scaling-parameters}
\end{table}

\paragraph{Emulation Methodology.}
We profile the time and energy consumption of each layer (e.g., Transformer decoder) in GPT-3 175B and Bloom 176B and run our optimization algorithm to obtain the \revised{time--energy} frontier.
We perform \emph{strong scaling} when varying the number of GPUs (Table~\ref{tab:eval-emulation-strong-scaling-parameters}) in order to keep the global batch size constant~\cite{largebatch1,largebatch2}.
We used A100 SXM GPUs for emulation, which we believe are more representative of large-scale training infrastructure. %

\paragraph{Emulator Fidelity.}
We compare the percentage of energy savings vs.\ running all maximum frequencies given by our emulator and real experiments for our A100 workloads and find that the emulator \emph{always underestimates} energy savings.
Specifically, savings on the leftmost and rightmost point of the frontier are underestimated by 18.6\% and 21.7\% on average, respectively.
We believe this is due to our simplifying assumption that $P_\mathrm{blocking}$ is constant regardless of the GPU's frequency.
This means \revised{the} savings given by the emulator can be considered a lower bound for actual savings.

\begin{figure}[t]
  \centering
  \subfloat{
    \includegraphics*[width=0.5\linewidth]{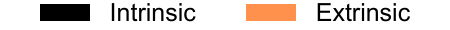}
  }

  \setcounter{subfigure}{0}

  \subfloat[A100]{
      \includegraphics[width=0.45\linewidth]{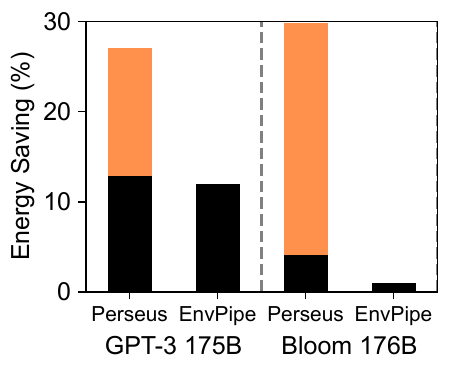}\label{fig:eval-bloat-overview-a100}
  }
  \subfloat[A40]{
      \includegraphics[width=0.45\linewidth]{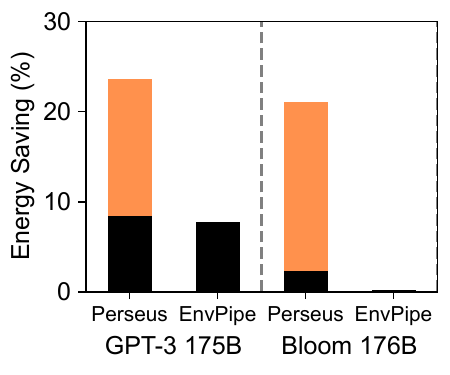}\label{fig:eval-bloat-overview-a40}
  }
  \vspace{-3mm}
  \caption{[Emulation] Energy savings breakdown with straggler slowdown 1.2 and 1,024 GPUs.}\label{fig:eval-large-model-bloat-e2e}
\end{figure}

\paragraph{Result Summary.}
Figure~\ref{fig:eval-large-model-bloat-e2e} breaks down the amount of energy bloat reduction for GPT-3 175B and Bloom 176B when slowdown degree is 1.2 on 1,024 GPUs.
EnvPipe can only reduce intrinsic bloat as it does not provide \revised{a time--energy} frontier; even for intrinsic bloat, it is suboptimal.
In contrast, Perseus reduces energy consumption by up to 30\% by reducing both intrinsic and extrinsic energy bloat.

\begin{table}[t]
  \footnotesize
  \centering
  \begin{tabular}{cc|rrrr}
    \toprule
    \multirow{2}{*}{\textbf{Model}} & \multirow{2}{*}{\textbf{GPU Type}} & \multicolumn{4}{c}{\textbf{\begin{tabular}[c]{@{}c@{}}Energy Savings (\%)\\ Given \# Microbatches\end{tabular}}} \\
                                    &                                    & 12     & 24    & 48    & 96    \\
    \midrule
    \multirow{2}{*}{GPT-3 175B}     & A100                               & 15.20  & 14.19 & 13.62 & 13.32 \\
                                    & A40                                & 11.81  & 10.22 & 9.34  & 8.88  \\
    \midrule
    \multirow{2}{*}{Bloom 176B}     & A100                               & 10.47  & 7.06  & 5.23  & 4.28  \\
                                    & A40                                & 6.97   & 4.49  & 3.12  & 2.41  \\
    \bottomrule
  \end{tabular}
  \caption{[Emulation] Perseus's intrinsic energy bloat reduction without stragglers for GPT-3 175B and Bloom 176B.
  Number of microbatches is varied following Table~\ref{tab:eval-emulation-strong-scaling-parameters}.}\label{tab:eval-intrinsic-bloat-gpt3}
\end{table}

\paragraph{Intrinsic Bloat Reduction Without Stragglers.}
Table~\ref{tab:eval-intrinsic-bloat-gpt3} shows Perseus's intrinsic energy bloat reduction without stragglers for GPT-3 175B and Bloom 176B.
The number of microbatches \revised{is} varied based on Table~\ref{tab:eval-emulation-strong-scaling-parameters}.
For all models, as \revised{more microbatches} are added to the pipeline, the amount of intrinsic bloat decreases.
This is fundamentally due to the ratio of microbatches in the 1F1B's warm-up and flush phase (beginning and end) \revised{versus the} steady state phase (middle).
Most microbatches in the warm-up and flush phases can slow down until their minimum energy frequency, yielding large energy savings.
However, microbatches in the pipeline's steady state cannot slow down to their full potential when the amount of stage imbalance is not large, thereby yielding modest savings.
When the number of microbatches in the pipeline increases, only the number of steady state microbatches increases, and energy reduction converges to the average energy savings of steady state microbatches.

\begin{figure}[t!]
  \centering
  \subfloat{}{
    \includegraphics[width=0.5\textwidth]{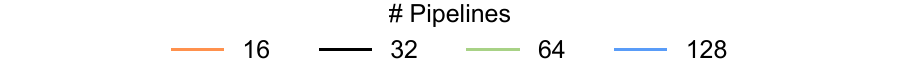}
  }
  \setcounter{subfigure}{0}
  \subfloat[GPT-3 175B on A100]{
      \includegraphics[width=0.21\textwidth]{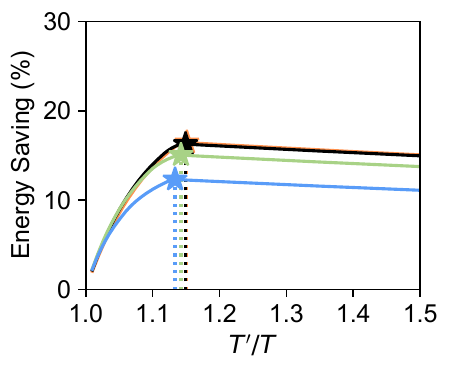}\label{fig:eval-extrinsic-gpt3-a100}
  }
  \subfloat[Bloom 176B on A100]{
      \includegraphics[width=0.21\textwidth]{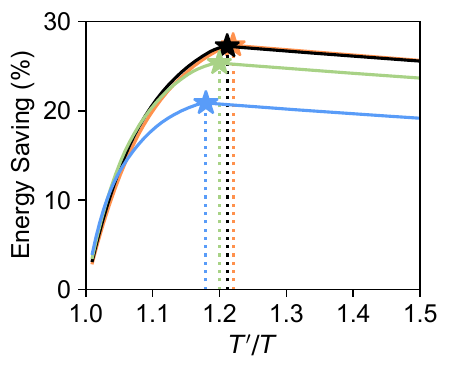}\label{fig:eval-extrinsic-bloom-a100}
  }
  \hfill %
  \subfloat[GPT-3 175B on A40]{
      \includegraphics[width=0.21\textwidth]{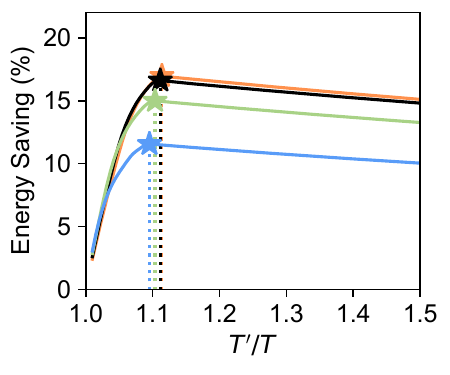}\label{fig:eval-extrinsic-gpt3-a40}
  }
  \subfloat[Bloom 176B on A40]{
      \includegraphics[width=0.21\textwidth]{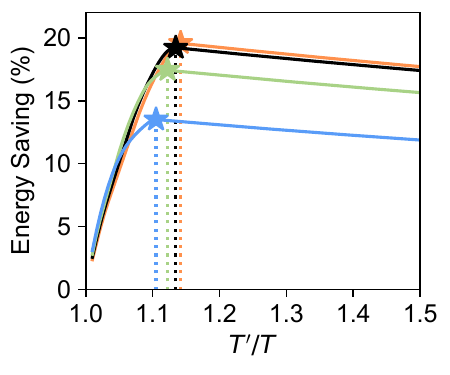}\label{fig:eval-extrinsic-bloom-a40}
  }
  \vspace{-3mm}
  \caption{[Emulation] Perseus's intrinsic + extrinsic energy bloat reduction with varying straggler slowdown ($T' / T$).
    Number of pipelines is varied following Table~\ref{tab:eval-emulation-strong-scaling-parameters}.
    $\bigstar$ denotes $T^*/T$ for each pipeline.
    Please note the different Y-axes.}\label{fig:eval-extrinsic-energy-bloat}
\end{figure}

\begin{figure*}[!t]
  \centering
  \subfloat{
    \includegraphics*[width=0.45\textwidth]{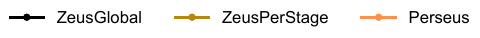}
  }

  \setcounter{subfigure}{0}
  \subfloat[PP=4 on A100, GPT-3 1.3B]{\includegraphics[width=0.315\textwidth]{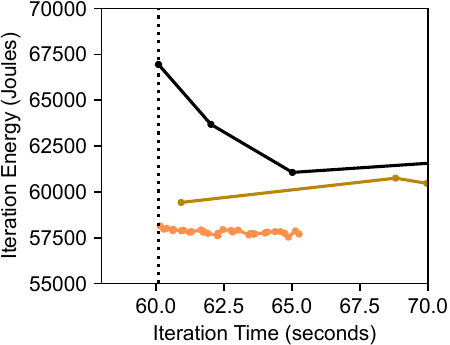}\label{fig:eval-gpt3-a100-pp4-frontier}}
  \hspace{4mm}
  \subfloat[PP=8 on A40, GPT-3 2.7B]{\includegraphics[width=0.315\textwidth]{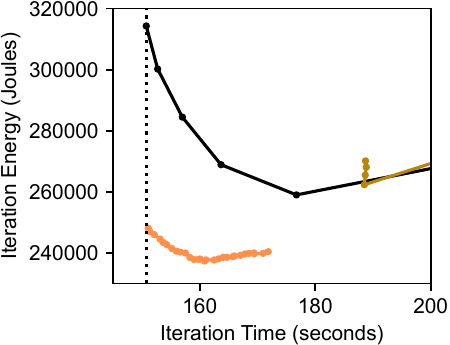}\label{fig:eval-gpt3-a40-pp8-frontier}}
  \hspace{4mm}
  \subfloat[DP=2, TP=2, PP=4 on A40, GPT-3 6.7B]{\includegraphics[width=0.315\textwidth]{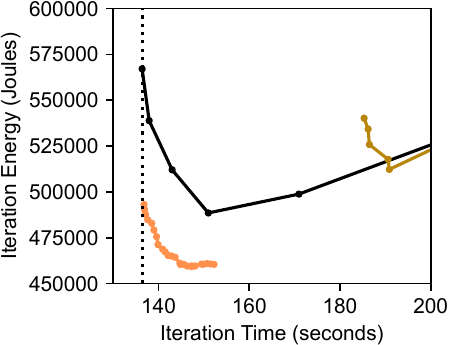}\label{fig:eval-gpt3-a40-3d-frontier}}
  \vspace{-3mm}
  \caption{[Experiment] Iteration time--energy frontiers for GPT-3, achieved by Perseus and the two baselines derived from Zeus~\cite{zeus-nsdi23}. Perseus Pareto-dominates all other approaches. The dotted vertical line is the iteration time of running all GPUs at their maximum power limit, which is the default mode of operation. 
  Please note the different X- and Y-axes.}\label{fig:eval-gpt3-frontiers}
\end{figure*}

\paragraph{Intrinsic + Extrinsic Bloat Reduction With Stragglers.}
We introduce stragglers of varying slowdowns in large-scale emulation. %
Figure~\ref{fig:eval-extrinsic-energy-bloat} reports the amount of intrinsic + extrinsic energy bloat reduction achieved by Perseus.
The trend where energy saving increases until $T' < T^*$ and wanes afterward is consistent with Section~\ref{sec:eval-intrinsic-extrinsic}.

An interesting observation here is that there is a tradeoff between scale and energy savings: \emph{configurations with more pipelines have less percentage of energy savings or less amount of energy savings per pipeline}.
It may seem intuitive to assume that more pipelines \revised{bring} more energy savings, as there is only one straggler pipeline and all the other pipelines can reduce their energy consumption.
However, this holds only in weak scaling scenarios, where the per-pipeline batch size is held constant (\ie, increasing the global batch size proportionally with the number of pipelines).
Instead, in the more realistic strong scaling configuration (\ie, the global batch size is constant and per-pipeline batch size is decreased as more pipelines deployed), each pipeline's number of microbatches changes.
With fewer microbatches, the ratio of pipeline bubble (time that GPUs are idle) at the beginning and end of each pipeline iteration increases~\cite{megatronlm-sc21}.
These bubbles cannot be perfectly eliminated by intrinsic energy bloat reduction, resulting in a smaller energy \revised{savings} percentages.
However, as the absolute number of GPUs increases, even a small savings percentage is expected to yield \emph{huge absolute energy savings}.

\subsection{Iteration Time--Energy Frontier Comparison}\label{sec:eval-frontier}

The energy bloat reductions in Sections~\ref{sec:eval-reducing-energy-bloat} and~\ref{sec:eval-large-scale-emulation} were made possible by the \revised{time--energy} frontier obtained using Perseus's optimization algorithm (\S\ref{sec:design}).
Here, we further examine the frontier with different parallelization configurations and models and compare against Zeus~\cite{zeus-nsdi23}.
Since Zeus only produces the training time--energy frontier for \emph{single-GPU} training jobs, we implemented two Zeus-based baselines for large model training scenarios:
\begin{denseitemize}
  \item \textbf{ZeusGlobal}: Scans one global power limit for all stages.
  \item \textbf{ZeusPerStage}: Finds a set of per-stage power limits that balances forward computation time.
\end{denseitemize}

Figure~\ref{fig:eval-gpt3-frontiers} shows the frontiers of all solutions for different sizes of GPT-3 under three parallelization configurations:
(a) four stage pipeline parallelism on A100;
(b) eight stage pipeline parallelism on A40; and 
(c) 3D parallelism (data parallelism 2, tensor parallelism 2, pipeline parallelism 4) on A40.
All results were obtained from running on testbed GPUs.
Appendix~\ref{sec:appendix-all-frontiers} has frontiers for other models.

\looseness=-1
Perseus Pareto-dominates both baselines derived from Zeus.
ZeusGlobal is unaware of pipeline stage imbalances and slows down every stage, and therefore is unable to intrinsic energy bloat.
ZeusPerStage can balance the forward computation time of each stage, but is unaware of the \emph{critical path} of the DAG, slowing down critical computations.
In contrast, Perseus can precisely slow down non-critical computations, effectively reducing energy bloat.

\subsection{Overhead of Perseus}\label{sec:eval-overhead}

\paragraph{Profiling.}
\revised{At} the beginning of training, the client-side profiler (\S\ref{sec:implementation}) profiles forward and backward computations in each stage.
For our A100 workloads, the initial profiling phase increased end-to-end training time by 13 minutes on average, which is negligible overhead for large model training.

\paragraph{Algorithm Runtime.}
The average runtime of the optimization algorithm (\S\ref{sec:design}) across our A100 workloads was 6.5 minutes, with the longest being Bloom 3B (15.7 minutes).
For our \revised{largest-scale} emulation workload (GPT-3 175B on A100 with 8,192 GPUs), the algorithm took 87 seconds, which is short because it is sufficient to optimize just one data parallel pipeline (\S\ref{sec:design-extensions}).
While runtime will increase with larger DAGs for larger models, we believe the overhead is justified because training time is likely to also increase with the scale of the training job.
Looking up the optimal energy schedule given the straggler's iteration time $T'$ is \revised{instantaneous}.

\section{Related \revised{Work}}\label{sec:related}

\paragraph{Large Model Training.}
Many recent works focus on enabling and accelerating large model training using 3D parallelism.
GPipe~\cite{gpipe-neurips19} and PipeDream~\cite{pipedream-sosp19} first introduced pipeline parallelism.
3D parallelism, especially popularized for Transformer-based models by Megatron-LM~\cite{megatronlm-arxiv,megatronlm-sc21}, is considered to be the go-to solution for modern large model training due to strong open-source projects~\cite{megatronlm-github,deepspeed-github,gpt-neox-github} and relatively \revised{low} implementation complexity.
Extending this, Alpa~\cite{alpa-osdi22}, GSPMD~\cite{spmd_socc22}, and nnScaler~\cite{nnscaler-osdi24} provide automatic parallelization for general DNNs.
However, energy consumption is not an optimization metric for any of the major large model training frameworks.

Several works utilize computation idle times within large model training pipelines to insert \revised{additional} useful work~\cite{bamboo-nsdi23,pipefisher-mlsys23,hydro-osdi23}.
In contrast, Perseus's approach is to slow down preceding computations to fill the idle time and reduce power and energy consumption.
\revised{The rationale is twofold.
First, the amount of idle time--especially in the steady state of the pipeline--is typically short.
Inserting extra work may slow down the entire pipeline or require pipeline re-partitioning to make it feasible.
Second, GPUs, especially those in the earlier stage of the pipeline, are likely already constrained by memory capacity, leaving little headroom for extra computation.}
\revised{We note that} even with extra computation inserted, some idle time is likely to remain, leaving room for Perseus to \revised{provide} energy savings.

\paragraph{Deep Learning and Energy Consumption.}
A line of \revised{work measures or estimates} the large amount of energy consumption and carbon emission of \revised{deep learning} workloads~\cite{patterson2021carbon,energy-nlp-policy,quantify-carbon,dodge-faact22,bloomcarbon-arxiv23}.
In terms of optimization, some works determine the GPU's execution speed for a single fixed sequence of GPU computations~\cite{odpp-ccgrid20,datadriven-ccgrid20,gpoeo-tpds21,chase-ccai23,depo-future23,cofris-igsc23}, falling short when the time--energy frontier of a complex large model computation DAG needs \revised{to be} characterized.
Zeus~\cite{zeus-nsdi23} is a recent work that observes the tradeoff between GPU computation time and energy consumption, but still focuses on single-GPU training.
EnvPipe~\cite{envpipe-atc23}, on the other hand, aims to find a point solution that reduces the energy consumption of large model training with minimum slowdown.
However, \revised{EnvPipe's} heuristic assumes that the last pipeline stage is always the bottleneck, leading to suboptimal savings.
\revised{In terms of optimizing a single training job,} Perseus is a superset of both Zeus and EnvPipe, achieved by viewing large model training as a computation DAG and introducing a principled optimization algorithm.

\paragraph{Big Data and Energy Consumption.}
There are existing works that change the frequency of CPUs to improve the energy efficiency of \revised{big data} workloads, including those that only execute computation-intensive phases (e.g., map and reduce) with the maximum frequency~\cite{mapreduce-dvfs-igcc11,hadoop-dvfs-fgcs16}, \revised{those that} choose the lowest frequency that can meet a given deadline~\cite{hadoop-dvfs-js17,express-icac17}, or \revised{those that choose} frequencies that balance the completion time of parallel tasks (yet simply minimizing idle time for smaller tasks within one stage)~\cite{spark-dvfs-js21}.
Perseus provides a principled approach \revised{to} setting the job's end-to-end completion time (\S\ref{sec:overview-optimization}) and task execution speed (\S\ref{sec:design}), and can be directly applied to \revised{big data} workloads by generating a DAG of CPU computations and optimizing the CPU's \revised{frequencies}.

\section{Conclusion}\label{sec:conclusion}

\looseness=-1
We presented Perseus, a software-based energy optimization system for large model training.
Perseus builds on the observation that there are computation imbalances at different levels in large model training that \revised{cause} \emph{intrinsic and extrinsic energy bloat}, and \revised{simultaneously reduces both with a principled graph cut-based algorithm.}
As a result, Perseus advances the state-of-the-art of DNN training energy optimization by delivering energy savings with negligible slowdown, thereby \revised{also} reducing average power draw and enhancing the sustainability of large model training.

\revised{\section*{Acknowledgements}
We would like to thank the SOSP reviewers, our shepherd Eddie Kohler, and SymbioticLab members for their insightful feedback. 
This work is in part supported by NSF grants CNS-2104243, CNS-2106184, and CCF-2327011, NWO VICI grant 639.023.812, grants from VMware and the Mozilla Foundation, and a gift from Salesforce. 
We also thank Chameleon Cloud for providing A100 nodes as well as CloudLab. 
Jae-Won Chung is additionally supported by the Kwanjeong Educational Foundation.}
\label{EndOfPaper}

\bibliographystyle{plain}
\bibliography{ref}

\clearpage

\appendix

\noindent\fbox{%
  \parbox{0.95\linewidth}{%
    \centering
    \textit{The Appendix has not been peer-reviewed\\and is provided for supplementary information.}%
  }%
}

\section{Visualizations for Intrinsic Energy Bloat}\label{sec:appendix-intrinsic-bloat}
  
\begin{figure*}[!t]
    \centering
    \subfloat[BERT 1.3B]{
      \includegraphics[width=0.44\linewidth]{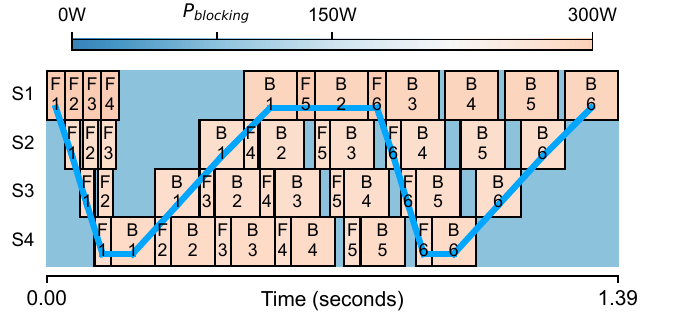}
      \includegraphics[width=0.44\linewidth]{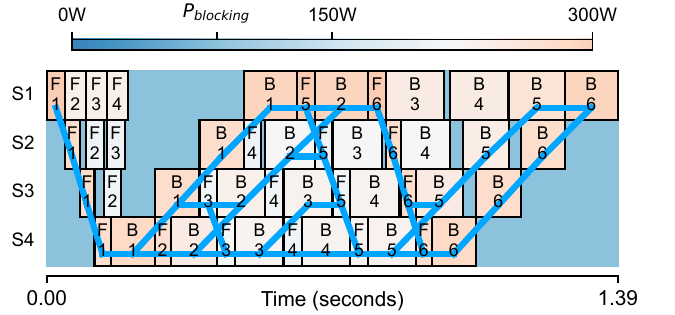}
    }

    \subfloat[T5 3B]{
      \includegraphics[width=0.44\linewidth]{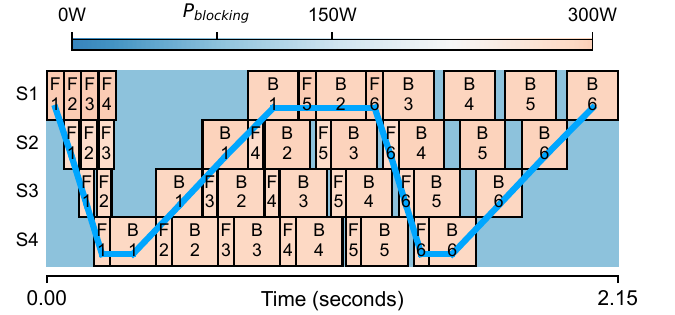}
      \includegraphics[width=0.44\linewidth]{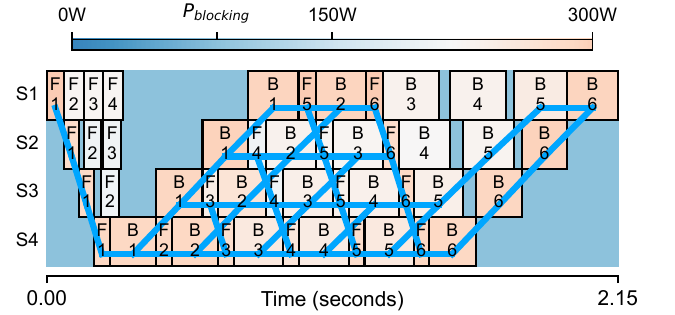}
    }
    
    \subfloat[Bloom 3B]{
      \includegraphics[width=0.44\linewidth]{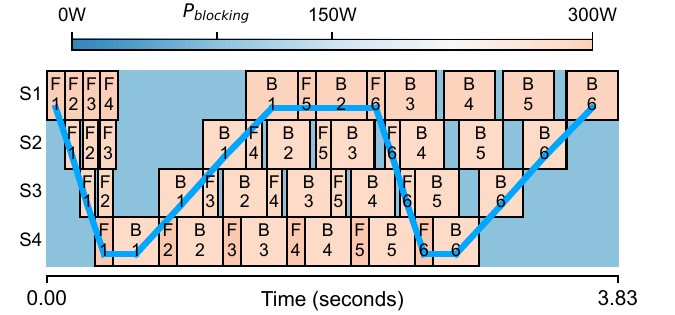}
       
      \includegraphics[width=0.44\linewidth]{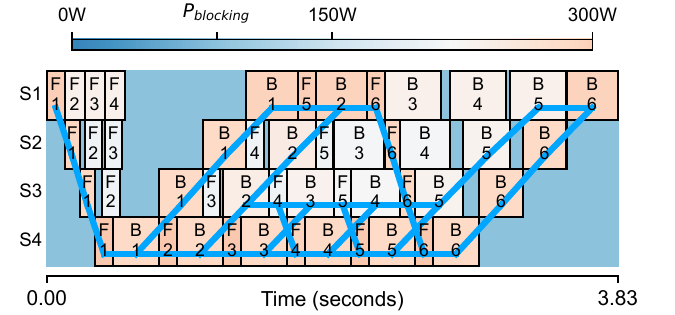}
    }

    \subfloat[Wide-ResNet101 1.5B]{
      \includegraphics[width=0.44\linewidth]{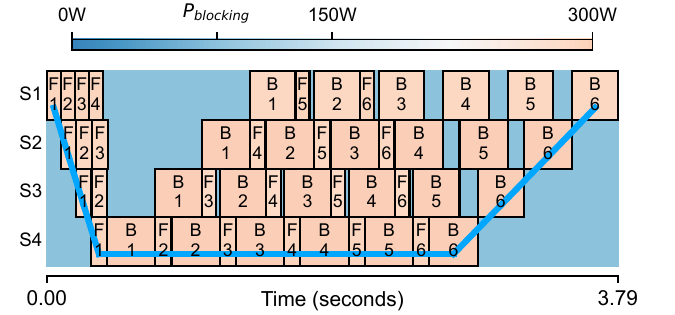}
      \includegraphics[width=0.44\linewidth]{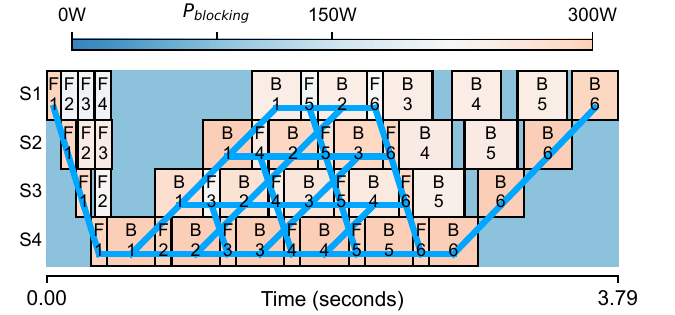}
    }

    \caption{Visualization of Perseus's $T_{\min}$ solution for four stage pipeline workloads on NVIDIA A100 PCIe GPUs. For each workload, the left is running every computation at maximum frequency, and the right is Perseus's energy schedule that reduces only intrinsic energy bloat without inflating iteration time. Note that these are \emph{not} real workloads we run in Section~\ref{sec:eval}; real workloads have far more microbatches.}\label{fig:appendix-intrinsic-bloat}
\end{figure*}

Figure~\ref{fig:appendix-intrinsic-bloat} shows the timeline of running one training iteration of BERT 1.3B, T5 3B, Bloom 3B, and Wide-ResNet101 1.5B for maximum frequency and Perseus-optimized energy schedule, respectively. 
For visualization purposes, we set the number of microbatches to 6.
Real evaluation workloads have more microbatches.
Energy schedule found by Perseus successfully tunes down frequency for all models without slowing down the iteration time, tightly packing computations over time and reducing intrinsic energy bloat.

\section{Workload Details}\label{sec:appendix-workloads}

\subsection{Minimum Imbalance Pipeline Partitioning}

\begin{table*}[!t]
  \footnotesize
	\centering
  \begin{subtable}[c]{\textwidth}
  \centering
  \begin{tabular}{lrrrll}
    \toprule
    \multirow{2}{*}{\textbf{Model}} & \multirow{2}{*}{\textbf{Size}} & \multicolumn{2}{c}{\textbf{Imbalance Ratio}} & \multicolumn{2}{c}{\textbf{Minimum Imbalance Ratio Partition}} \\ 
                                    & & 4 stages & 8 stages & 4 stages & 8 stages \\
    \midrule
    \multirow{4}{*}{GPT-3~\cite{gpt3}}          &   1B  & 1.17  & 1.33  & [0, 6, 12, 19, 25]  & [0, 4, 7, 10, 13, 16, 19, 22, 25] \\
                                                &   3B  & 1.13  & 1.25  & [0, 8, 16, 25, 33]  & [0, 5, 9, 13, 17, 21, 25, 29, 33] \\
                                                &   7B  & 1.11  & 1.23  & [0, 8, 16, 24, 33]  & [0, 4, 8, 12, 16, 20, 24, 28, 33]  \\
                                                &  13B  & 1.08  & 1.17  & [0, 10, 20, 30, 41]  & [0, 5, 10, 15, 20, 25, 30, 35, 41]  \\
                                                & 175B  & 1.02  & 1.03  & [0, 24, 48, 72, 97]  & [0, 12, 24, 36, 48, 60, 72, 84, 97]  \\
    \midrule
    \multirow{3}{*}{Bloom~\cite{bloom-arxiv22}} &   3B  & 1.13  & 1.25  & [0, 9, 17, 25, 31]  & [0, 6, 11, 16, 22, 28, 31]  \\
                                                &   7B  & 1.13  & 1.25  & [0, 9, 17, 25, 31]  & [0, 6, 11, 16, 22, 28, 31]  \\
                                                & 176B  & 1.05  & 1.10  & [0, 18, 36, 54, 71]  & [0, 9, 18, 27, 36, 45, 54, 63, 71]  \\
    \midrule
    \multirow{3}{*}{BERT~\cite{bert}}           & 0.1B  & 1.33  & 2.00  & [0, 4, 7, 10, 13]  & [0, 2, 3, 4, 6, 8, 10, 12, 13]  \\
                                                & 0.3B  & 1.17  & 1.33  & [0, 7, 13, 19, 25]  & [0, 3, 6, 9, 12, 15, 18, 22, 25]  \\
                                                & 1.3B  & 1.17  & 1.33  & [0, 7, 13, 19, 25]  & [0, 4, 7, 10, 13, 16, 19, 22, 25]  \\
    \midrule
    \multirow{3}{*}{T5~\cite{t5-jmlr20}}        & 0.2B  & 1.19  & 1.50  & [0, 9, 15, 20, 25]  & [0, 5, 9, 13, 15, 17, 19, 22, 25]  \\
                                                & 0.7B  & 1.05  & 1.11  & [0, 16, 29, 39, 49]  & [0, 8, 16, 24, 29, 34, 39, 44, 49]  \\
                                                & 2.9B  & 1.06  & 1.16  & [0, 15, 28, 38, 49]  & [0, 7, 15, 23, 28, 33, 38, 43, 49]  \\
    \midrule
    Wide-ResNet50~\cite{wideresnet-bmvc16}      & 0.8B  & 1.23  & 1.46  & [0, 5, 9, 14, 18]  & [0, 3, 5, 7, 9, 11, 13, 15, 18]  \\
    Wide-ResNet101~\cite{wideresnet-bmvc16}     & 1.5B  & 1.09  & 1.25  & [0, 8, 17, 26, 35]  & [0, 4, 8, 12, 16, 21, 26, 31, 35]  \\
    \bottomrule
  \end{tabular}
  \subcaption{NVIDIA A100 PCIe GPUs.}
  \end{subtable}

  \begin{subtable}[c]{\textwidth}
  \centering
  \begin{tabular}{lrrrll}
    \toprule
    \multirow{2}{*}{\textbf{Model}} & \multirow{2}{*}{\textbf{Size}} & \multicolumn{2}{c}{\textbf{Imbalance Ratio}} & \multicolumn{2}{c}{\textbf{Minimum Imbalance Ratio Partition}} \\ 
                                    & & 4 stages & 8 stages & 4 stages & 8 stages \\
    \midrule
    \multirow{4}{*}{GPT-3~\cite{gpt3}}          &   1B  & 1.15  & 1.31  & [0, 6, 12, 18, 25] & [0, 3, 6, 9, 12, 15, 18, 21, 25]  \\
                                                &   3B  & 1.11  & 1.21  & [0, 8, 16, 24, 33]  & [0, 4, 8, 12, 16, 20, 24, 28, 33]  \\
                                                &   7B  & 1.08  & 1.17  & [0, 8, 16, 24, 33]  & [0, 4, 8, 12, 16, 20, 24, 28, 33]  \\
                                                &  13B  & 1.07  & 1.14  & [0, 10, 20, 30, 41]  & [0, 5, 10, 15, 20, 25, 30, 35, 41]  \\
                                                & 175B  & 1.01  & 1.02  & [0, 24, 48, 72, 97]  & [0, 12, 24, 36, 48, 60, 72, 84, 97]  \\
    \midrule
    \multirow{3}{*}{Bloom~\cite{bloom-arxiv22}} &   3B  & 1.13  & 1.25  & [0, 9, 17, 25, 31]  & [0, 5, 9, 13, 17, 21, 25, 29, 31]  \\
                                                &   7B  & 1.13  & 1.25  & [0, 9, 17, 25, 31]  & [0, 5, 9, 13, 17, 21, 25, 29, 31]  \\
                                                & 176B  & 1.03  & 1.06  & [0, 18, 36, 54, 71]  & [0, 9, 18, 27, 36, 45, 54, 63, 71]  \\
    \midrule
    \multirow{3}{*}{BERT~\cite{bert}}           & 0.1B  & 1.33  & 2.00  & [0, 4, 7, 10, 13]  & [0, 1, 2, 4, 6, 8, 10, 12, 13]  \\
                                                & 0.3B  & 1.17  & 1.33  & [0, 7, 13, 19, 25]  & [0, 4, 7, 10, 13, 16, 19, 22, 25]  \\
                                                & 1.3B  & 1.17  & 1.33  & [0, 7, 13, 19, 25]  & [0, 3, 6, 9, 12, 15, 18, 22, 25]  \\
    \midrule
    \multirow{3}{*}{T5~\cite{t5-jmlr20}}        & 0.2B  & 1.20  & 1.50  & [0, 9, 15, 20, 25]  & [0, 5, 9, 13, 15, 17, 19, 22, 25]  \\
                                                & 0.7B  & 1.06  & 1.12  & [0, 16, 29, 39, 49]  & [0, 8, 16, 24, 29, 34, 39, 44, 49]  \\
                                                & 2.9B  & 1.07  & 1.17  & [0, 15, 28, 38, 49]  & [0, 8, 15, 23, 28, 33, 38, 43, 49]  \\
    \midrule
    Wide-ResNet50~\cite{wideresnet-bmvc16}      & 0.8B  & 1.13  & 1.72  & [0, 5, 9, 14, 18]  & [0, 3, 5, 7, 9, 11, 13, 15, 18]  \\
    Wide-ResNet101~\cite{wideresnet-bmvc16}     & 1.5B  & 1.08  & 1.25  & [0, 8, 17, 26, 35]  & [0, 4, 8, 12, 16, 21, 26, 31, 35]  \\
    \bottomrule
  \end{tabular}
  \subcaption{NVIDIA A40 GPUs.}
  \end{subtable}

  \caption{Imbalance ratio between the longest and the shortest stages for various models. 1.00 would mean perfect balance. Partitions for $N$ stages is a list of $N + 1$ numbers, where the numbers represent layer indices. For instance, [0, 6, 12, 19, 25] for GPT-3 1.3B means there are 6, 6, 7, and 5 Transformer layers in each stage, and the final stage also has the language model head.}\label{tab:appendix-partition}
\end{table*}

We partition layers of a model into $N$ stages such that the imbalance ratio, defined as the ratio of the longest stage forward latency to the shortest, is minimized.
We only consider forward computation time as backward computations are typically proportional to forward computation latency.
For Transformer-based models, we define layer as one Transformer layer.
For Wide-ResNet, we define layer as one Bottleneck layer, which is three convolution layers wrapped with a skip connection.
Due to P2P communication overhead and implementation challenges, many planners and frameworks do not support partitioning in the middle of skip connections.
We call this \emph{minimum imbalance pipeline partitioning}, and throughout the paper, every workload we use is partitioned as such.

Table~\ref{tab:appendix-partition} shows the computation time ratios of the heaviest stage to the lightest stage for 4 and 8 pipeline stages.
More pipeline stages generally increases imbalance due to the coarse-grained nature of tensor operations.
That is, the relative size of each layer becomes smaller and smaller compared to the total amount of computation allocated to each stage, and imbalance increases.

\paragraph{GPT-3, Bloom, and BERT.}
Arguably, these are one of the most homogeneous large models because they are a stack of identical Transformer~\cite{transformer-neurips17} encoder or decoder layers.
However, the very last layer is the language modeling head, which maps the final features to probabilities over the entire vocabulary.
The vocabulary size of GPT-3 is 50k, Bloom 251k, and BERT 31k, which results in a very large linear layer for the last stage.
This leads to varying amounts of imbalance and different minimum imbalance partitioning results for each model.

\paragraph{T5.}
This is also based on Transformer layers, but the first half of the layers are encoders, while the later half are decoders (which corresponds to the original Transformer~\cite{transformer-neurips17} model's architecture).
However, the decoder layers as an extra cross attention layer in the middle, making it computationally heavier.
Finally, T5 also ends with a language model head with 32k vocabulary size.
However, minimum imbalance partitioning still balances T5 to a reasonable degree, although it cannot be perfectly balanced.

\paragraph{Wide-ResNet.}
For Wide-ResNet, in order to make it suitable for large model training evaluation, we used the variant with width factor 8.
Wide-ResNet is a collection of Bottleneck layers with three convolutions wrapped with a skip connection, and there are four different sizes of Bottleneck layers laid out sequentially.
Therefore, even with minimum imbalance partitioning, it is difficult to perfectly balance stages.

\subsection{Does Imbalance Decrease with Larger Models?}

Specifically for Transformer~\cite{transformer-neurips17}-based models with homogeneous intermediate Transformer layers, the degree of imbalance will decrease \emph{if the number of pipeline stages is held constant}.
This is because the relative size of the embedding layer and language model head will decrease as intermediate Transformer layers increase in number and size.
However, as the model grow larger, the number of pipeline stages will have to be increased simultaneously.
This increases imbalance, because the relative amount of computation in one layer with respect to the amount in one pipeline stage becomes larger.
As such, there is no simple relationship between model size and the amount of imbalance.

\subsection{Alternative Planning Methods}

3D parallelism is the go-to solution for large model training, and our minimum imbalance pipeline partitioning method is optimal for 3D parallelism because we implemented a brute force search.
More advanced planning methods such as Alpa~\cite{alpa-osdi22} exist, but for repetitive language model architectures like GPT-3, Alpa equally allocates Transformer layers to each stage~\cite{alpa-osdi22}, resulting in the same stage imbalance caused by the language modeling head.
Furthermore, it is hard for tensor parallelism to divide operations infinitely; practically at most degree 8 (within one node) due to collective communication overhead.
Thus, in the vast majority of models, there will always be some degree of imbalance between stages due to the minimum granularity of computation time.

\subsection{Experiment Parameters}

\begin{table*}[!t]
	\centering
  \begin{tabular}{lrrrr}
    \toprule
    \textbf{Model} & \textbf{\# Parameters} & \textbf{Global Batch Size} & \textbf{Microbatch Size} & \textbf{\# Microbatches} \\ 
    \midrule
    gpt3-6.7b & 6.7 B & 1024 & 4 & 128 \\
    \bottomrule
  \end{tabular}
  \caption{Experiment Parameters for 3D parallelism experiments on A40 GPUs. Microbatch size is per-pipeline, and there are two data parallel copies of the same pipeline. Thus, global batch size should be calculated as the product of microbatch size and the number of microbatches times two.}\label{tab:appendix-parameters-3d-a40}
\end{table*}

\begin{table*}[!t]
	\centering
  \begin{tabular}{lrrrr}
    \toprule
    \textbf{Model} & \textbf{\# Parameters} & \textbf{Global Batch Size} & \textbf{Microbatch Size} & \textbf{\# Microbatches} \\ 
    \midrule
    gpt3-2.7b & 2.7 B & 1024 & 4 & 256 \\
    bert-huge-uncased & 1.3 B & 256 & 8 & 32 \\
    t5-3b & 3.0 B & 128 & 4 & 32 \\
    bloom-3b & 3.0 B & 512 & 4 & 128 \\
    wide-resnet101 (width factor 8) & 1.5 B & 1536 & 32 & 48 \\
    \bottomrule
  \end{tabular}
  \caption{Experiment Parameters for eight-stage pipeline parallelism experiments on A40 GPUs. Model variant names are as described in Torch Vision~\cite{pytorch} or Huggingface Hub~\cite{huggingface}.}\label{tab:appendix-parameters-pp-a40}
\end{table*}

\begin{table*}[!t]
	\centering
  \begin{tabular}{lrrrr}
    \toprule
    \textbf{Model} & \textbf{\# Parameters} & \textbf{Global Batch Size} & \textbf{Microbatch Size} & \textbf{\# Microbatches} \\ 
    \midrule
    gpt3-xl & 1.3 B & 512 & 4 & 128 \\
    bert-huge-uncased & 1.3 B & 256 & 8 & 32 \\
    t5-3b & 3.0 B & 128 & 4 & 32 \\
    bloom-3b & 3.0 B & 512 & 4 & 128 \\
    wide-resnet101 (width factor 8) & 1.5 B & 1536 & 64 & 24 \\
    \bottomrule
  \end{tabular}
  \caption{Experiment Parameters for Pipeline Parallelism Experiments on A100 PCIe GPUs. Model variant names are as described in Torch Vision~\cite{pytorch} or Huggingface Hub~\cite{huggingface}.}\label{tab:appendix-parameters-pp-a100}
\end{table*}

Tables~\ref{tab:appendix-parameters-3d-a40},~\ref{tab:appendix-parameters-pp-a40}, and~\ref{tab:appendix-parameters-pp-a100} list model variant names and experiment parameters for our experiments.
Model names and configurations for GPT-3 were taken as is from the original model publication~\cite{gpt3}.
Especially, model names and configurations for BERT and T5 were directly taken from the Huggingface Hub pretrained model zoo, except for the huge variant of BERT, which we created to have hidden dimension 2048.
Wide-ResNet was based on Torch Vision~\cite{pytorch} but scaled up following the model's original publication~\cite{wideresnet-bmvc16} using its width factor parameter.
The unit time parameter $\tau$ was set to 1 ms for all experiments.

\section{Pipeline Energy Minimization is NP-hard}\label{sec:appendix-np-hardness}

The Pipeline Energy Minimization problem is stated again for convenience:
\begin{equation}
  \begin{aligned}
    \label{eq:appendix-pem}
    \min_{F}      \quad & \textrm{Energy}(F) \\
    \textrm{s.t.} \quad & \textrm{Time}(F) \le T'
  \end{aligned}
\end{equation}
where $F$ is the frequency assignment for each pipeline computation $i \in \mathcal{G}$, $T'$ is the straggler pipeline's iteration time.
The decision problem corresponding to Equation~\ref{eq:appendix-pem} asks whether it is possible to find frequency assignment $F$ such that the total energy consumption is minimized while the iteration time of pipeline $\mathcal{G}$ is no longer than the straggler's iteration time.
We denote this problem \textsc{pem}.

In the following, we show that a simplification of \textsc{pem} is NP-hard by reduction from the 0/1 Knapsack problem, which makes \textsc{pem} NP-hard.

\subsection{One Stage Two Frequencies Simplification}\label{sec:appendix-one-stage}

A simplification of \textsc{pem} is considering the case where there is only one pipeline stage and two frequencies to choose from.
 
For each pipeline computation $i \in \{ 1, 2, \ldots, n \}$, we can set the GPU frequency to either the lowest value or the highest, denoted as $[L, H]$ respectively.
Choosing different frequencies will lead to different execution time and energy consumption.
That is, if $i$ is chosen to execute at frequency $L$, it will take $t_{i}(L)$ time and $e_{i}(L)$ energy.
On the other hand, if $i$ executes at frequency $H$, it takes $t_{i}(H)$ time and $e_{i}(H)$ energy.
The time and energy consumption of $i$ are rational numbers, as they are rounded up to $\tau$.

Our goal is to minimize the energy consumption of executing all computations while satisfying the time constraint.
Specifically, given a time deadline $T'$, we want to pick a subset of operations $J\subseteq \{1, 2, \ldots, n\}$ and assign them to execute at the lowest frequency $L$ and execute the rest of the operations with the highest frequency $H$, such that the total time needed to execute all operations is smaller than or equal to the deadline:
\[\sum^n_{i=1} ( X_i t_{i}(L) + (1 - X_i) t_{i}(H) ) \leq T'\]
where $X_i$ is a 0/1 indicator variable where $X_i = 1$ if $i \in J$ and $X_i = 0$ otherwise.
Under this time constraint, the goal is to minimize the total energy consumption of executing all computations:
\[\sum^n_{i=1} ( X_i e_{i}(L) + (1 - X_i) e_{i}(H) ).\]

Formally, we denote this problem as

\[\textsc{pem-1d}(T=(T_L, T_H), E=(E_L, E_H), T', EC)\]
where $T_L=[t_{1}(L),\ldots, t_{n}(L)]$, $T_H=[t_{1}(H),\ldots, t_{n}(H)]$ are execution time vectors for low and high frequency respectively, $E_L=[e_{1}(L),\ldots, e_{n}(L)]$, $E_H=[e_{1}(H),\ldots, e_{n}(H)]$ are energy consumption vectors for low and high frequency respectively, and $EC$ the target energy consumption.

$\textsc{pem-1d}$ returns true if and only if there exists a subset of operations $J\subseteq \{1, 2, \ldots, n\}$ such that $\sum^n_{i=1} ( X_i t_{i}(L) + (1 - X_i) t_{i}(H) ) \leq T'$ and $\sum^n_{i=1} ( X_i e_{i}(L) + (1 - X_i) e_{i}(H) ) \leq EC$.

\subsection{0/1 Knapsack Problem}

Consider two length $n$ arrays containing positive integer weights $W = (w_1, w_2, \ldots, w_n)$ and values $V = (v_1, v_2,  \ldots, v_n)$ where the $i$th item has weight $w_i \in \mathbb{Q}^+$ and value $v_i\in \mathbb{Q}^+$, and a knapsack with weight capacity $C\in \mathbb{Q}^+$.

The goal is to pick a subset of items $S\subseteq \{1, 2, \ldots, n\}$, such that the total weight of the chosen items is less than or equal to the weight capacity: $\sum_{i\in S} w_i\le C$.
Under this constraint, the goal is to maximize the total value of items in the knapsack: $\sum_{i\in S} v_i$.

Formally, we denote the decision problem of 0/1 Knapsack as
\[\textsc{knapsack}(W[1,\ldots,n], V[1,\ldots,n], C, P)\]
where $P \in \mathbb{Q}$ is the target value.

$\textsc{knapsack}$ returns true if and only if there exists a subset of items $S\subseteq \{1, 2, \ldots, n\}$ such that $\sum_{i\in S} w_i\le C$ and $\sum_{i\in S} v_i \ge P$.

It is well known that $\textsc{knapsack}$ is NP-hard. 

\subsection{NP-hardness Proof}

\begin{theorem}
\textsc{pem-1d} is NP-hard.
\end{theorem}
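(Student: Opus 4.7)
The plan is to prove NP-hardness by a polynomial-time reduction from \textsc{knapsack} to \textsc{pem-1d}. The key intuition is that choosing to run a computation at the low frequency $L$ is ``cheap'' in energy but ``expensive'' in time, mirroring the act of placing an item of value $v_i$ and weight $w_i$ into a knapsack of capacity $C$. Thus each binary frequency choice $X_i \in \{0,1\}$ in the \textsc{pem-1d} instance will correspond to the include/exclude choice for item $i$ in the \textsc{knapsack} instance, with ``include in knapsack'' identified with ``assign to low frequency''.

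Given an arbitrary \textsc{knapsack} instance $(W, V, C, P)$ with $W=(w_1,\dots,w_n)$, $V=(v_1,\dots,v_n)$, I would construct the \textsc{pem-1d} instance with $n$ computations and parameters
\begin{equation}
t_i(H) = 0,\quad t_i(L) = w_i,\quad e_i(H) = v_i,\quad e_i(L) = 0,
\label{eq:reduction}
\end{equation}
together with deadline $T' = C$ and energy bound $EC = \sum_{i=1}^n v_i - P$. This construction is clearly computable in time polynomial in the bit length of the input. If the problem formulation requires strictly positive (and $\tau$-aligned) quantities, I would instead offset both tracks by a small common constant, e.g. $t_i(H)=\tau$, $t_i(L)=\tau + w_i$, $e_i(H)=\epsilon + v_i$, $e_i(L)=\epsilon$, and shift $T'$ and $EC$ by $n\tau$ and $n\epsilon$ respectively; this preserves the monotonicity assumptions that $L$ is slower and less energetic than $H$.

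The correctness argument is then a direct calculation. With indicator $X_i = 1$ meaning computation $i$ runs at $L$, the time constraint of \textsc{pem-1d} becomes
\begin{equation}
\sum_{i=1}^n \bigl( X_i t_i(L) + (1-X_i) t_i(H) \bigr) = \sum_{i=1}^n X_i w_i \le C,
\end{equation}
which is exactly the knapsack capacity constraint on the set $S = \{ i : X_i = 1\}$. Likewise, the energy condition $\sum_i \bigl( X_i e_i(L) + (1-X_i) e_i(H) \bigr) \le EC$ simplifies to $\sum_{i \notin S} v_i \le \sum_i v_i - P$, i.e.\ $\sum_{i \in S} v_i \ge P$, which is the \textsc{knapsack} objective condition. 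Hence the constructed \textsc{pem-1d} instance is a ``yes'' instance iff the original \textsc{knapsack} instance is. Since \textsc{pem-1d} is the restriction of \textsc{pem} to one stage and two available frequencies, any algorithm for \textsc{pem} solves \textsc{pem-1d}, so \textsc{pem} inherits NP-hardness as well.

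The only real obstacle I anticipate is bookkeeping around the admissibility of the constructed numerical values: the problem statement insists that times and energies be positive rationals rounded up to $\tau$, so the naive mapping in \eqref{eq:reduction} with zeros on one side of each pair may formally be disallowed. I would resolve this with the $\tau, \epsilon$ offset described above and verify that the shifted thresholds $T'$ and $EC$ remain polynomially bounded rationals, which they do since both shifts are additive and depend only on $n$. Everything else in the reduction is mechanical, so this rounding/positivity check is the single step where care is needed.
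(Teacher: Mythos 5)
Your proposal is correct and is essentially the paper's proof: the same reduction from 0/1 Knapsack, identifying the low-frequency choice with placing item $i$ (weight $w_i$ as time at $L$, capacity $C$ as the deadline $T'$). The only difference is cosmetic---the paper encodes values as negative energies ($e_i(L) = -v_i$, $e_i(H) = 0$, $EC = -P$), whereas you complement the values ($e_i(H) = v_i$, $e_i(L) = 0$, $EC = \sum_i v_i - P$) to keep all quantities nonnegative, which is a slightly tidier handling of the positivity issue you flag.
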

\begin{proof}
We will show that $\textsc{knapsack} \leq_p \textsc{pem-1d}$, i.e., the 0/1 Knapsack problem is polynomial-time reducible to the simplified pipeline energy minimization problem.
Reduction function $f$ takes $(W[1,\ldots,n], V[1,\ldots,n], C, P)$ as input and does the following:
\begin{denseenum}
  \item Construct $n$ computations and empty vectors $T_L, T_H, E_L, E_H$.
  \item For $\forall i$, set $t_{i}(L)=w_i$ and append to $T_L$.
  \item For $\forall i$, set $t_{i}(H)=0$ and append to $T_H$. 
  \item For $\forall i$, set $e_{i}(L)=-v_i$ and append to $E_L$.
  \item For $\forall i$, set $e_{i}(H)=0$ and append to $E_H$.
  \item Set $T'$ = $C$ and $EC$ = $-P$.
  \item Output $(T=(T_L, T_H), E=(E_L, E_H), T', EC)$
\end{denseenum}

\paragraph{Correctness Analysis}
If $(W[1,\ldots,n], V[1,\ldots,n], C, P) \in \textsc{knapsack}$, there exists a subset $S$ such that $\sum_{i\in S} w_i\le C$ and $\sum_{i\in S} v_i \ge P$.
Now for $\textsc{pem-1d}(T=(T_L, T_H), E=(E_L, E_H), T', EC)$, select computations that have the same indices as items in $S$ to execute at low frequency $L$, while executing others at high frequency $H$.
Then, for the time constraint, $\sum^n_{i=1} ( X_i t_{i}(L) + (1 - X_i) t_{i}(H) )  = \sum^n_{i=1} X_i t_{i}(L)  = \sum_{i\in S} w_i \leq C = T'$, and for target energy, $\sum^n_{i=1} ( X_i e_{i}(L) + (1 - X_i) e_{i}(H) )= \sum^n_{i=1} X_i e_{i}(L) = \sum_{i\in S} -v_i \leq -P = EC$.

If $(W[1,\ldots,n], V[1,\ldots,n], C, P) \notin \textsc{knapsack}$, there does not exist a subset $S$ such that $\sum_{i\in S} w_i\le C$ and $\sum_{i\in S} v_i \ge P$. There are two possibilities:
either a subset $S$ that satisfies the weight constraint does not exist at all ($w_i > C, \forall i$)
or none of the subsets $S$ that satisfy the weight constraint satisfy $\sum_{i\in S} v_i \ge P$.
For the first possibility, this means all the computations must select the high frequency as the low frequency does not satisfy the time constraint.
Then total energy consumption is 0, which is larger than $EC = -P$ since $P \in \mathbb{Q}^+$.
For the second possibility, for all subsets $S$, $\sum_{i\in S} v_i < P$, which means that for all subsets of computations $\sum^n_{i=1} ( X_i e_{i}(L) + (1 - X_i) e_{i}(H) ) = \sum_{i\in S} -v_i > -P = EC$, so none of them satisfy the energy constraint.

\paragraph{Efficiency Analysis}
Step 1--5 each takes $O(n)$ time.
Step 6 takes $O(1)$ time.
Finally, step 7 takes $O(n)$ time. 

Therefore, the function $f$ takes $O(n)$ time, which is polynomial time w.r.t the input size.

\end{proof}

\section{Continuous Relaxation}\label{sec:appendix-relaxation}

\begin{figure}[t]
  \centering
  \includegraphics[width=0.42\textwidth]{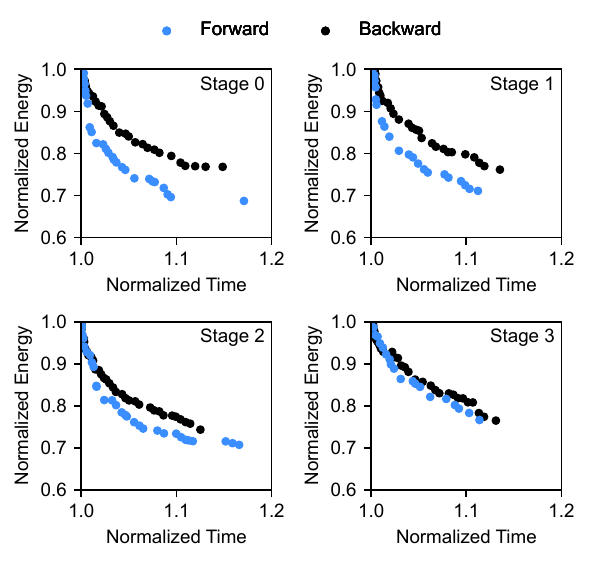}
  \caption{Pareto-optimal (time, energy) choices for forward and backward computations in each stage of GPT-3 0.3B on NVIDIA A40 GPUs.}\label{fig:appendix-exponential-relaxation}
\end{figure}

The Pipeline Energy Minimization problem is NP-Hard, as we have proved in Appendix~\ref{sec:appendix-np-hardness}.
Thus, we perform continuous relaxation for the problem by fitting an exponential function to each Pareto-optimal (time, energy) points for each forward and backward computation in each stage.
Furthermore, the exponential function captures the nature of \emph{diminishing returns} well, in that the amount of energy consumption needed to reduce computation time by a constant amount increases multiplicatively.

Figure~\ref{fig:appendix-exponential-relaxation} visualizes Pareto-optimal (time, energy) measurements for GPT-3 0.3B on A40 GPUs, showing that the exponential function is a natural fit for data.
This was our consistent observation across different GPUs and models in our evaluation.

\section{Full Details of GetNextSchedule}\label{sec:appendix-min-cut}

For the sake of presentation, we made a simplifying statement in the body of the paper that computations only speed up by $\tau$.
However, since we aim to speed up all critical paths by precisely $\tau$, speeding up more than one computation from a critical path allows other computations on that critical path to be slowed down.
We always take such slowdown opportunity because it will decrease energy consumption.

In the following, we describe the procedure of annotating edges with flow capacities and then solving the problem with maximum flow.

\subsection{Generating Capacity DAG}

On the computation DAG, we first remove all the computations that are not on any of the critical paths and construct the \emph{Critical DAG}. 
We would like to find a set of edges $I^+$ to speed up by $\tau$ and $I^-$ to slow down by $\tau$ on the Critical DAG, such that the total energy consumption increases minimally.
This can be described as the following problem:
\begin{equation}
  \label{eq:design-reduction-cost}
  \min_{I^+,~I^-}      \quad \sum_{i \in I^+}{e_i^+} - \sum_{i \in I^-}{e_i^-},
\end{equation}
where $e_i(t_i)$ is the exponential function fit to Pareto-optimal (time, energy) measurements for computation $i$, $e_i^+ = e_i(t_i - \tau) - e_i(t_i)$ is the extra amount of energy needed to speed up $i$ by $\tau$ and $e_i^- = e_i(t_i) - e_i(t_i + \tau)$ is the energy saved from slowing down $i$ by $\tau$.

An important fact that leads us to the solution is that (i) the problem of finding the set of edges to modify such that energy increase is minimized (\ie, solving Equation~\ref{eq:design-reduction-cost}) coincides with (ii) finding the \emph{minimum cut} of a DAG whose lower and upper bound flow capacities are defined as
\begin{equation}
  \begin{aligned}
    \label{eq:design-edge-capacity}
    (l_i, u_i) =
    \begin{cases}
      (0, e_i^+)      & \textrm{if } t_i \textrm{ is longest possible (slowest)} \\
      (e_i^-, \infty) & \textrm{if } t_i \textrm{ is shortest possible (fastest)} \\
      (e_i^-, e_i^+)  & \textrm{otherwise.}
    \end{cases}
  \end{aligned}
\end{equation}
This equivalence was given by the theoretical works of Phillips and Dessouky~\cite{pd-mnsc77,pd-caie16}.

Thus, we construct the \emph{Capacity DAG} from the Critical DAG by annotating its edges with flow capacities given by Equation~\ref{eq:design-edge-capacity}.
The capacity of an $S-T$ cut ($S$ is the set of nodes on the source side and $T$ the sink side) on the Capacity DAG with $S \rightarrow T$ edges $I^+$ and $T \rightarrow S$ edges $I^-$ is identical to the objective in Equation~\ref{eq:design-reduction-cost}.
Then, we use the Edmonds-Karp maximum flow algorithm~\cite{edmondskarp1972} to find the minimum cut of the Capacity DAG.
Finally, after the minimum cut has been identified from the Capacity DAG, edges in $I^+$ are sped up by $\tau$ and those in $I^-$ are slowed down by $\tau$, ultimately reducing the length of every critical path exactly by $\tau$ with the smallest possible energy increase.

\subsection{Max Flow Algorithm on the Capacity DAG}\label{sec:appendix-max-flow}

A characteristic of our Capacity DAG that precludes the direct application of well-known max flow algorithms is that some edges also have flow \emph{lower bounds}, asserting that at least a certain amount of flow must pass through in the edge.
However, the Max Flow Min Cut theorem by Ford and Fulkerson holds for the case of non-zero flow lower bounds (See Chapter 1, Section 9 of~\cite{fordfulkerson}), allowing us to find the minimum cut (which is equivalent to the minimum energy modification set) using any maximum flow algorithm.
We adopt an approach that adds dummy source/sink nodes to create a DAG that has zero flow lower bounds, finds the maximum flow on the new DAG, and extracts the flow so that it corresponds to a flow in the original DAG~\cite{boundedflow}.
The algorithm is given in Algorithm~\ref{algo:bounded-max-flow}.

\begin{algorithm}[t]
  \algrule{}

  \SetKwFunction{EdmondsKarp}{EdmondsKarp}
  \SetKwFunction{FlowValue}{FlowValue}

  \KwIn{Directed Acyclic Graph $G = (V, E)$\newline
        Source node $s \in V$ and sink node $t \in V$\newline
        Lower and upper bounds $l(e)$, $u(e)$ for $\forall e \in E$
  }
  \KwOut{A maximum feasible flow of $G$ if it exists}

  \algrule{}

  Construct a new graph $G' = (V', E')$ by adding new source and sink nodes $s'$ and $t'$, edges from $s'$ to each node in $V$, edges from each node in $V$ to $t'$, and an edge from $t$ to $s$\;

  \Comment{Define the capacity $c'(e)$ of each edge $e \in E'$}:
  \For{$v \in V$}{
      $c'(s'v) \leftarrow \sum_{u \in V} l(uv)$\;
      $c'(vt') \leftarrow \sum_{w \in V} l(vw)$\;
  }
  \For{$uv \in E$}{
      $c'(uv) \leftarrow u(uv) - l(uv)$\;
  }
  $c'(ts) \leftarrow \infty$\;

  \Comment{Find the max flow on G'}
  $f' \leftarrow$ \EdmondsKarp($G'$, $c'$)\;

  \Comment{$G$ has a feasible flow if and only if $G'$ has a saturating flow}
  \If{\FlowValue($f'$) $\neq \sum_{v \in V}{c'(s'v)}$}{
    \Return{nil}\;
  }

  \Comment{Convert $f'$ to a feasible flow $f$ in $G$}
  \For{$uv \in E$}{
    $f(uv) \leftarrow f'(uv) + l(uv)$\;
  }

  \Comment{Construct residual graph and improve $f$ to max flow}
  \For{$uv \in E$}{
    $c(uv) \leftarrow u(uv) - f(uv)$\;
    $c(vu) \leftarrow f(vu) - l(vu)$\;
  }
  \Return{\EdmondsKarp($G$, $c$)}\;

  \algrule{}

  \caption{Maximum Flow with Lower Bounds}\label{algo:bounded-max-flow}
\end{algorithm}

\section{Proof for Polynomial Runtime}\label{sec:appendix-polytime}

Perseus enumerates the entire \revised{time--energy} frontier one by one, and the runtime of one iteration is polynomial time with respect to the number of stages $N$ and the number of microbatches $M$.
Thus, determining whether the entire algorithm runs in polynomial time reduces to whether the worst case number of iterations is polynomial with respect to $N$ and $M$.
While for general DAGs the maximum number of points on the frontier can be exponential with respect to the size of the DAG~\cite{skutella-thesis}, here we prove that under mild assumptions for DAGs that represent pipeline schedules, the number of iterations is $O(N + M)$.
The assumptions are valid for all pipeline schedules known to the authors, including GPipe~\cite{gpipe-neurips19} and 1F1B~\cite{pipedreamflush-icml21}.

\begin{theorem}
  For DAGs that represent pipeline schedules, the number of iterations needed is $O(N + M)$.
\end{theorem}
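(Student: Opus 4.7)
My plan is to bound the number of frontier breakpoints by exploiting the layered structure of pipeline DAGs. The first step is to establish that for every standard pipeline schedule (GPipe, 1F1B, interleaved 1F1B, early-recompute 1F1B), every directed path from the source node to the sink node in the computation DAG has length $O(N+M)$. This follows from labelling each node by a (stage, microbatch, forward/backward-phase) triple: along any path, either the stage index advances (contributing at most $2N$ edges across the forward and backward sweeps) or the microbatch index advances within a fixed stage (contributing at most $M$ edges), plus a constant number of forward-to-backward phase transitions. Consequently any Critical DAG extracted during the algorithm has at most $O(N+M)$ edges lying on its longest chain, and total critical-edge count bounded in the same order after accounting for parallel critical paths whose width is at most a constant in these schedules.

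Next I would analyse the only two structural events that can distinguish one invocation of GetNextSchedule from the next on the relaxed (continuous) problem. At each step the algorithm computes a minimum-energy s--t cut on the current Critical DAG and compresses the corresponding edges at rate $\tau$; a new iteration is triggered exactly when either (i) an edge of the cut reaches its fastest-frequency boundary and must be dropped from the compressible set, or (ii) a previously non-critical edge becomes tight and joins the Critical DAG. Each event of type~(i) can be charged to a distinct edge along a critical path, of which there are $O(N+M)$ by the length argument, so type~(i) events happen $O(N+M)$ times in total. Events of type~(ii) strictly enlarge the critical subgraph; since the critical subgraph is a subset of the full DAG whose edges participating in \emph{any} longest path are bounded by $O(N+M)$ under the pipeline structure, type~(ii) events also total $O(N+M)$.

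The main obstacle I expect is ruling out ``silent'' iterations in which neither event~(i) nor event~(ii) occurs but the min-cut nonetheless shifts to a different set of edges. I would handle this by arguing that in pipeline DAGs with strictly convex (exponential) cost functions, the minimum-energy cut is unique between structural events, so any apparent change of cut must coincide with an edge-saturation or edge-activation event. Consequently the parametric min-cut procedure takes a maximal step between consecutive structural events, and the number of such events bounds the iteration count. Combining this uniqueness argument with the two charging schemes above yields the desired $O(N+M)$ bound, from which the overall $O((N+M)N^3M^3)$ running time follows immediately.
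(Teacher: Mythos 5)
Your opening step---that every source-to-sink path in a pipeline DAG contains $O(N+M)$ computations---is indeed the key combinatorial fact, and it matches the lemma the paper proves (there via the prologue / steady-state / epilogue decomposition rather than your stage/microbatch labeling). But from that point the paper's proof is essentially one line, and you never take it: each call to GetNextSchedule reduces the schedule's iteration time by \emph{exactly} $\tau$, so the number of iterations of Algorithm~\ref{algo:design-overview} is $(t_{\max}-t_{\min})/\tau$. Since the critical path has $O(N+M)$ computations and each computation's duration ranges between its $f_{\max}$ and $f_{\min}$ values (constants independent of $N$ and $M$), $t_{\max}-t_{\min}=O(N+M)$, and the theorem follows with $\tau$ treated as a constant. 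No analysis of how the minimum cut evolves is required.

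The route you substitute is both mismatched to the algorithm and unsound on its own terms. Iterations are not ``triggered'' by saturation or activation events: many consecutive $\tau$-steps can reuse the same cut, and, conversely, nothing in your argument bounds how many iterations occur between two structural events, so counting events does not count iterations. The charging arguments also fail. In the steady state of a filled pipeline the set of edges lying on \emph{some} critical path can have size $\Theta(NM)$, and the ``width'' of parallel critical paths is up to $N$ (every stage can be simultaneously critical when stages are balanced), not a constant; hence neither your type~(i) nor type~(ii) events are bounded by $O(N+M)$ as claimed. Finally, the number of cut-structure breakpoints in parametric time--cost tradeoff (project crashing) problems can be exponential in the DAG size in general---this is precisely why the paper cites Skutella---so the ``unique cut between events'' claim would need a genuine pipeline-specific proof that you do not supply. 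None of this machinery is needed once you observe the fixed-$\tau$ decrement per iteration.
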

\begin{proof}
Since we always reduce iteration time by $\tau$, the number of iterations is
\[\frac{t_{\max} - t_{\min}}{\tau}\]
where $t_{\max}$ and $t_{\min}$ are the maximum and minimum possible iteration time, respectively.

Assume that any pipeline schedule representing one iteration of training has a prologue, a steady state, and an epilogue.
The prologue is when the pipeline starts from an empty state and is gradually filled with pipeline computations, while the epilogue is when the pipeline is drained to reach an empty state.
It is easy to see that the number of pipeline computations on the critical path of both the prologue and epilogue is $O(N)$, as deeper pipelines (larger $N$) take longer to fill.
On the other hand, the steady state of the pipeline is when the pipeline is completely filled, and the number of pipeline computations in any simple path through the steady state of the DAG is $O(M)$.
Therefore, the total number of pipeline computations in the critical path of the entire DAG is $O(N + M)$.

$t_{\max}$ and $t_{\min}$ can be constructed by multiplying the number of computations with the average execution time of a computation.
Computations are executed with frequencies $f_{\min}$ and $f_{\max}$, respectively, and thus the multipliers of $N$ and $M$ do not cancel out when $t_{\max} - t_{\min}$ is evaluated.
Therefore, $t_{\max} - t_{\min}$, and hence $(t_{\max} - t_{\min})/\tau$, is $O(N + M)$.
\end{proof}

\begin{listing}
  \begin{minted}[]{python}
# In the framework's pipeline execution engine:
from perseus.client import profiler, controller

def train_step(model, dataloader):
    ...

    for instuction in pipeline_schedule:
        if isinstance(instruction, Forward):
            controller.set_speed("forward")
            profiler.begin("forward")
            # Run forward on microbatch
            profiler.end("forward")
        elif isinstance(instruction, Backward):
            controller.set_speed("backward")
            profiler.begin("backward")
            # Run backward on microbatch
            profiler.end("backward")
    ...
\end{minted}
\caption{Perseus client API integration example.}\label{lst:appendix-integration}
\end{listing}

\section{Perseus Client Integration Example}\label{sec:appendix-integration}

Listing~\ref{lst:appendix-integration} shows an example integration of Perseus's Client API (first three rows in Table~\ref{tab:implementation-client-api}) with a hypothetical (but typical) training framework's pipeline execution engine.

A typical structure of a pipeline execution engine is to have \emph{instructions} for each distinct operations in the pipeline, including not only forward and backward executions, but also P2P and collective communications, and implement a handler for each instruction.
Therefore, framework developers can wrap such handlers with the Perseus client APIs to mark their beginning and end.

\section{Time--Energy Frontiers}\label{sec:appendix-all-frontiers}

\begin{figure}[!t]
	\centering
	\hfil
	\subfloat[][BERT]{
		\includegraphics[width=0.45\linewidth]{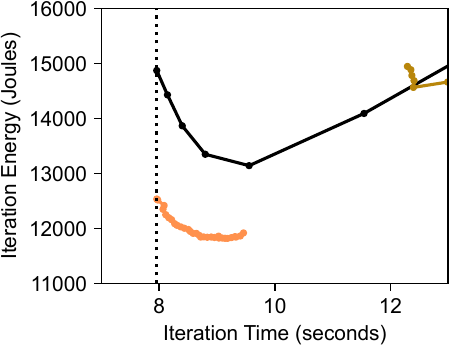}
	}
	\hfil
	\subfloat[][T5]{
		\includegraphics[width=0.45\linewidth]{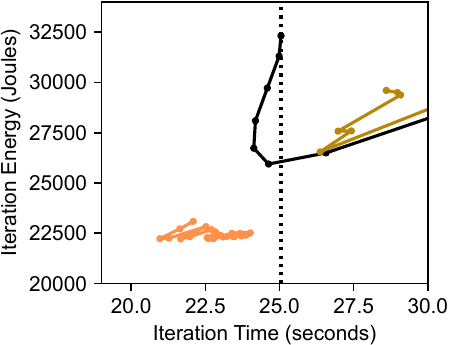} 
	}
	\hfil
	\subfloat[][Bloom]{
		\includegraphics[width=0.45\linewidth]{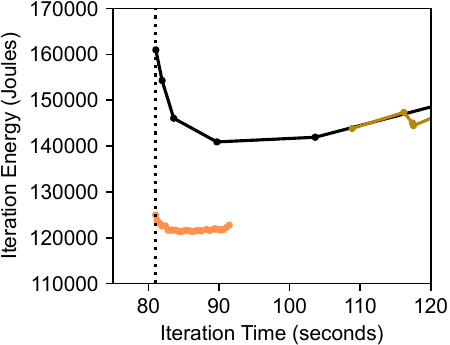}
	}
	\hfil
	\subfloat[][Wide-ResNet]{
		\includegraphics[width=0.45\linewidth]{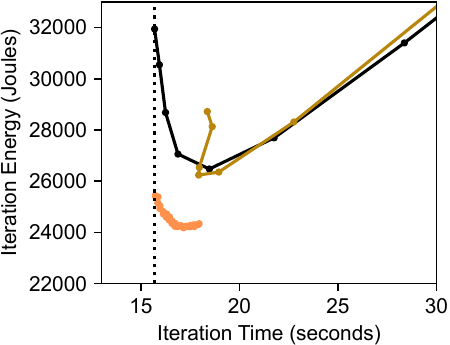}
	}
	\hfil
  \caption{Eight stage pipeline parallelism on A40.}\label{fig:appendix-all-frontiers-pp8-a40}
\end{figure}

Figure~\ref{fig:appendix-all-frontiers-pp8-a40} shows the \revised{time--energy} frontiers achieved by Perseus and the two baseline approaches for the rest of the workloads ran with eight stage pipeline parallelism, measured in NVIDIA A40 GPUs.

T5 shows an interesting frontier due to the hardware topology of our A40 machine setup: Each node has four GPUs and NVLink connects GPUs 0 and 1, and 2 and 3; GPUs 1 and 2 must communicate through the NUMA interconnect; Finally, different nodes are connected with Infiniband only adjacent to GPUs 0 and 1 (data to and from GPUs 2 and 3 must also go through the NUMA interconnect).
The implication of this heterogeneous GPU interconnect is that if more than one P2P communications that need to go through the NUMA interconnect happen at the same time, contention happens and both of data transfers slow down significantly.
However, Perseus's plan reduces this contention, overall decreasing iteration time noticeably.
Yet, contention and noisy communication latencies still exist, leading to a noisy frontier.

For many ZeusPerStage lines, energy fluctuates significantly when iteration time increases due to ZeusPerStage being unaware of critical paths.
Balancing the forward computation time between stages could even let the modified stages take over the critical path.
As a result, the iteration time increases, which increases energy bloat, and more energy is spent on blocking on communication (\S\ref{sec:design-problem-formulation}).
When the decreased energy on computation fails to cover the increased energy on P2P communication, total energy increases.

\begin{figure}[!t]
	\centering
	\hfil
	\subfloat[][BERT]{
		\includegraphics[width=0.45\linewidth]{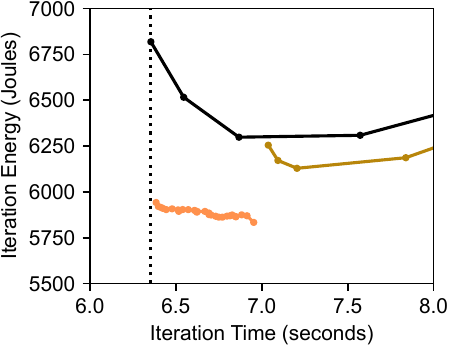}
	}
	\hfil
	\subfloat[][T5]{
		\includegraphics[width=0.45\linewidth]{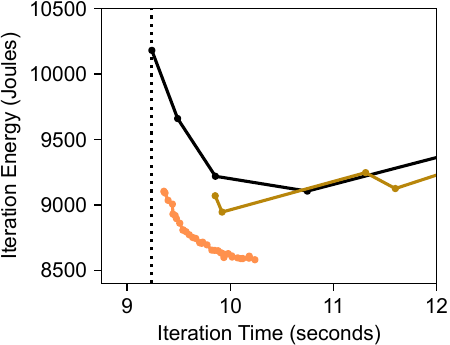} 
	}
	\hfil
	\subfloat[][Bloom]{
		\includegraphics[width=0.45\linewidth]{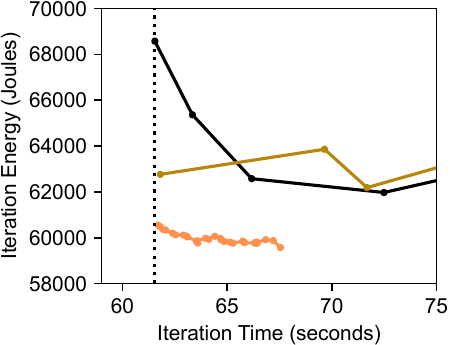}
	}
	\hfil
	\subfloat[][Wide-ResNet]{
    \includegraphics[width=0.45\linewidth]{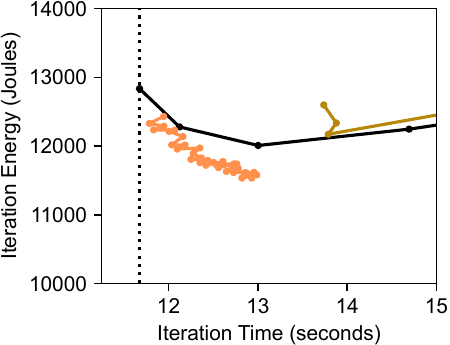}
	}
	\hfil
  \caption{4 stage pipeline parallelism on A100 PCIe.}\label{fig:appendix-all-frontiers-pp4-a100}
\end{figure}

Figure~\ref{fig:appendix-all-frontiers-pp4-a100} shows the \revised{time--energy} frontiers achieved by Perseus and the two baseline approaches for the rest of the workloads, measured with four stage pipeline parallelism in NVIDIA A100 PCIe GPUs.
Wide-ResNet has a noisy frontier because the variability in microbatch loading time introduces noise in the end-to-end iteration time when computations are tightly packed by Perseus.
This was not pronounced in A40 GPUs because compared to A100 PCIe, computation is slower, but data loading time is similar.
Thus, the noise in data loading time becomes more noticeable in A100 PCIe.

\cleardoublepage

\end{document}